\newcommand{\bl}[1]{\textcolor{blue}{#1}}
\definecolor{mypurple}{rgb}{.4,.0,.5}
\def\y{{\bf y}}
\def\x{{\bf x}}
\def\x{{\mathbf x}}
\def\x{{\bf x}}
\def\y{{\bf y}}
\def\z{{\bf z}}
\def\h{{\bf h}}
\def\tr{\mbox{Tr}}
\def\tr{{\rm tr}\,}
\def\cB{{\mathcal B}}
\def\cG{{\mathcal G}}
\def\cI{{\mathcal I}}
\def\be{\begin{equation}}
\def\ee{\end{equation}}
\def\ba{\left[\begin{array}}
\def\ea{\end{array}\right]}
\def\x{{\bf x}}
\def\y{{\bf y}}
\def\z{{\bf z}}
\def\1{{\bf 1}}
\def\g{{\bf g}}
\def\0{{\bf 0}}
\def\mR{{\mathbb R}}
\def\mE{{\mathbb E}}
\def\mP{{\mathbb P}}
\def\lp{\left (}
\def\rp{\right )}
\def\y{{\bf y}}
\def\x{{\bf x}}
\def\x{{\mathbf x}}
\def\x{{\bf x}}
\def\y{{\bf y}}
\def\z{{\bf z}}
\def\h{{\bf h}}
\def\tr{\mbox{Tr}}
\def\tr{{\rm tr}\,}
\def\cB{{\cal B}}
\def\be{\begin{equation}}
\def\ee{\end{equation}}
\def\ba{\left[\begin{array}}
\def\ea{\end{array}\right]}
\def\x{{\bf x}}
\def\y{{\bf y}}
\def\z{{\bf z}}
\def\R{{\bf R}}
\def\({\left (}
\def\){\right )}
\def\1{{\bf 1}}
\def\g{{\bf g}}
\def\0{{\bf 0}}
\definecolor{darkgreen}{rgb}{0, 0.4,0}
\definecolor{purplebrown}{rgb}{0.5,0.1,0.6}
\definecolor{ultclupcol}{rgb}{0.1,0.5,0.5}
\definecolor{mytrycolor}{rgb}{0.5,0.7,0.2}
\definecolor{ultclupcola}{rgb}{.5,0,.5}
\definecolor{shadebrown}{rgb}{0.1,0.1,0.9}
\definecolor{lightblue}{rgb}{0.2,0,1}
\newtcbox{\xmybox}{on line,
arc=7pt,
before upper={\rule[-3pt]{0pt}{10pt}},boxrule=0pt,
boxsep=0pt,left=6pt,right=6pt,top=0pt,bottom=0pt,enhanced, coltext=blue, colback=white!10!yellow}
\newtcbox{\xmyboxa}{on line,
arc=7pt,
before upper={\rule[-3pt]{0pt}{10pt}},boxrule=0pt,
boxsep=0pt,left=6pt,right=6pt,top=0pt,bottom=0pt,enhanced, colback=white!10!yellow}
\newtcbox{\xmyboxb}{on line,
arc=7pt,
before upper={\rule[-3pt]{0pt}{10pt}},boxrule=1pt,colframe=darkgreen!100!blue,
boxsep=0pt,left=6pt,right=6pt,top=0pt,bottom=0pt,enhanced, colback=white!10!yellow}
\newtcbox{\xmyboxc}{on line,
arc=7pt,
before upper={\rule[-3pt]{0pt}{10pt}},boxrule=.7pt,colframe=blue!100!blue,
boxsep=0pt,left=6pt,right=6pt,top=0pt,bottom=0pt,enhanced, coltext=blue, colback=white!10!yellow}
\newtcbox{\xmytboxa}{on line,
arc=7pt,
before upper={\rule[-3pt]{0pt}{10pt}},boxrule=.0pt,colframe=pink!50!yellow,
boxsep=0pt,left=6pt,right=6pt,top=0pt,bottom=0pt,enhanced, coltext=white, colback=blue!40!red}
\newtcbox{\xmytboxb}{on line,
arc=7pt,
before upper={\rule[-3pt]{0pt}{10pt}},boxrule=.0pt,colframe=pink!50!yellow,
boxsep=0pt,left=6pt,right=6pt,top=0pt,bottom=0pt,enhanced, coltext=white, colback=white!40!green}
\newcommand\subsubsubsection{\@startsection{paragraph}{4}{\z@}{-2.5ex\@plus -1ex \@minus -.25ex}{1.25ex \@plus .25ex}{\normalfont\normalsize\bfseries}}
\newcommand\subsubsubsubsection{\@startsection{subparagraph}{5}{\z@}{-2.5ex\@plus -1ex \@minus -.25ex}{1.25ex \@plus .25ex}{\normalfont\normalsize\bfseries}}
\newtheorem{theorem}{Theorem}
\newtheorem{lemma}{Lemma}
\begin{document}

\begin{singlespace}

\title{Phase retrieval with rank $d$ measurements -- \emph{descending} algorithms phase transitions  
}
\author{
\textsc{Mihailo Stojnic
\footnote{e-mail: {\tt flatoyer@gmail.com}} }}
\date{}
\maketitle

\centerline{{\bf Abstract}} \vspace*{0.1in}

Companion paper \cite{Stojnicphretreal24} developed a powerful \emph{Random duality theory} (RDT) based analytical program to statistically characterize performance of \emph{descending} phase retrieval algorithms (dPR) (these include all variants of gradient descents and among them widely popular Wirtinger flows). We here generalize the program and show how it can be utilized to handle rank $d$ positive definite phase retrieval (PR) measurements (with special cases $d=1$ and $d=2$ serving as emulations of the real and complex phase retrievals, respectively). In particular, we observe that the minimal  sample complexity ratio (number of measurements scaled by the dimension of the unknown signal) which ensures dPR's success  exhibits a phase transition (PT) phenomenon. For both plain and lifted RDT we determine phase transitions locations. To complement theoretical results we implement a log barrier gradient descent variant and observe that, even in small dimensional scenarios (with problem sizes on  the order of 100), the simulated phase transitions are in an excellent agreement with the theoretical predictions.

\vspace*{0.25in} \noindent {\bf Index Terms: Phase retrieval; Rank $d$ measurements; Descending algorithms; Random duality theory}.

\end{singlespace}

\section{Introduction}
\label{sec:back}

Assume that a unit norm $\bar{\x}\in\R^n$ is measured through the collection of  \emph{phaseless} measurements
\begin{eqnarray}
   \y &=& |A\bar{\x}|^2, \label{eq:inteq1}
\end{eqnarray}
where $A\in\mR^{m\times n}$.  Provided access to $\y$ and $A$, recovering $\bar{\x}$ amounts to solving the following system of quadratic equations
\begin{eqnarray}
\mbox{find} & & \x \nonumber \\
   \mbox{subject to} & & |A\x|^2=\y (= |A\bar{\x}|^2). \label{eq:inteq2}
\end{eqnarray}
The above is the \emph{phase retrieval} (PR) problem in its real form. In a host of signal and image processing protocols acquiring signal's phase is technically infeasible. In such scenarios efficiently solving PRs appears as a key mathematical/algorithmic  challenge that allows practical data acquisition and recovery.

Fairly reach history of PR applications dates back to early days of x-ray crystallography \cite{Harrison93,Millane90,Millane06}. Expansion of  x-ray crystallography to noncrystalline materials, allowed PR to became an irreplaceable  component of coherent diffraction, ptychography, optical, microscopic, and  astronomical  imaging (see, e.g., \cite{Thibault08,Hurt89,KST1995,Miao1999ExtendingTM,ShechtmanECCMS15,MISE08,Bunk07,Walther01011963,Fienup87,Fienup78,Rod08,Dierolf10,BS79,Misell73}).  As understanding of main PR principles improved over the last several decades, the range of applications within and outside image processing widened. Some examples  of further extensions include digital holography \cite{Duadi11,Gabor48,Gabor65}, quantum physics \cite{Corb06,Hein13,HaahHJWY17}, blind deconvolution/demixing \cite{MWCC19,LLSW19,Jung17,ARJ13}, and many others.

\subsection{A first look at PR's difficulty}
\label{sec:probdiff}

In this paper we consider practically relevant scenario where the underlying problem dimensions are large. Moreover, we consider mathematically the most challenging  \emph{linear/proportional} regime
\begin{eqnarray}
 \alpha \triangleq \lim_{n\rightarrow\infty} \frac{m}{n}, \label{eq:inteq3}
\end{eqnarray}
where $\alpha$ is the so-called \emph{oversampling or sample complexity ratio} which remains constant as $n$ and $m$ grow. Two aspects of (\ref{eq:inteq2}) are important. The first one is of predominantly theoretical character and relates to the ``mathematical soundness''. It particularly  focuses on algebraic  characterizations of uniqueness, injectivity, and stability  (see, e.g., \cite{Conca15,Balan06,Balan09,Bande14,Vinz15} and references therein). The second is of more practical relevance and relates to algorithmic solving of (\ref{eq:inteq2}) (since  both $\bar{\x}$ and $-\bar{\x}$ are  admissible solutions,  solving up to a \emph{global} phase is assumed throughout the paper). The key algorithmic difficulty is the appearance of (squared) magnitudes. A naive removal of magnitudes transforms the problem into an easily solvable linear system but incurs an exponential complexity (on the order of $O(2^m)$) and  is practically useless.

\subsection{PR with rank $d$ measurements}
\label{sec:probdiff}

Since the simple methods (along the lines of the above naive combinatorial approach) were not computationally satisfactory, development of computationally more efficient solvers ensued and a rather vast algorithmic theory has been built. Particularly relevant progress has been made over the last 10-15 years and is in large part due to the appearance of breakthrough papers  \cite{CandesSV13,CandesESV13,CandesLS15}. One of these breakthroughs, \cite{CandesLS15}, uncovered that a simple gradient algorithm (also called Wirtinger flow) can often act as a very successful PR solver. Companion paper \cite{Stojnicphretreal24} made a strong theoretical progress in justifying such a remarkable gradient property. In particular, \cite{Stojnicphretreal24} provided \emph{phase transition} level of performance analysis of a more general class of  \emph{descending} phase retrieval  algorithms (dPR). Here we make a substantial progress, move things on a higher level and provide similar phase transition type of analysis for the so-called rank $d$ measurements phase retrieval (basically a rank $d$ variant of (\ref{eq:inteq2})).

As stated above, (\ref{eq:inteq2}) is a phase retrieval real variant. Within the rank $d$ measurements context it is obtained as a special case, $d=1$. For $d=2$ one then obtains the emulation of the complex phase retrieval -- a more relevant scenario in some imaging application. Higher ranks $d$ are providing further theoretical support when the measurements' structure can be controlled beyond what is typically seen in the standard PR imaging applications (for similar higher rank measurements considerations see e.g., \cite{Dux16,Basu00,Zehni19,Zehni20,Bill10,Juhas06,Huang19,Huangetal20,Barahona88}).

While the standard phase retrieval is usually described through (\ref{eq:inteq1}) and (\ref{eq:inteq2}),  (\ref{eq:inteq1}) can also be rewritten in the following (slightly different) form
\begin{eqnarray}
   \bar{\y}_i = |A_{i,:}\bar{\x}|^2 = \bar{\x}^T A_{i,:}^T A_{i,:}\bar{\x} = \bar{\x}^T \cB_{(i)}\bar{\x}, \quad \cB_{(i)}=A_{i,:}^T A_{i,:}, \quad i=1,2,\dots,m, \label{eq:ex0inteq1}
\end{eqnarray}
where $A_{i,:}\in\mR^{1\times n}$ is the $i$-th row of $A$. Moreover, the above matrix form of measurements allows one to further describe  the data acquisition as
\begin{eqnarray}
   \bar{\y}_i &=&  \bar{\x}^T \cB_{(i)}\bar{\x}, \quad   \cB_{(i)}^T=\cB_{(i)}\geq 0 \mbox{ is rank $1$}, \quad i=1,2,\dots,m. \label{eq:ex0inteq2}
\end{eqnarray}
 Formulation from (\ref{eq:ex0inteq2}) implies that (\ref{eq:inteq1}) is  the so-called \emph{real} phase retrieval (PR) with rank 1 positive-semidefinite (PSD) measurements. Its a rank $d$ analogue is then naturally given as
\begin{eqnarray}
\hspace{-.08in}\bl{\textbf{\emph{Rank $d$ measurements (RdM):}}} \quad\quad   \bar{\y}_i &=&  \bar{\x}^T \cB_{(i)}\bar{\x}, \quad   \cB_{(i)}^T=\cB_{(i)}\geq 0 \mbox{ is rank $d$}, \quad i=1,2,\dots,m. \label{eq:ex0inteq3}
\end{eqnarray}
Recovering $\bar{\x}$ provided the access to $\bar{\y}$ and rank $d$ PSD $\cB_{(i)}$ assumes solving the following inverse feasibility problem (basically rank $d$ measurements analogue to (\ref{eq:inteq2}))
\begin{eqnarray}
\hspace{-.05in}\bl{\textbf{\emph{RdM phase retrieval (RdM PR):}}} \qquad\qquad  \mbox{find} & & \x \nonumber \\
   \mbox{subject to} & & \x^T\cB_{(i)}\x=\bar{\y}_i (= \bar{\x}^T \cB_{(i)}\bar{\x}), \quad i=1,2,\dots,m. \label{eq:ex0inteq4}
\end{eqnarray}

To be able to make a clear parallel with the standard real and complex PR we take $\cB_{(i)}$ as PSD. In general, however, one can also consider the higher rank indefinite $\cB_{(i)}$ as well \cite{Huang19,Huangetal20}. Moreover, (\ref{eq:ex0inteq4}) can also be rewritten as
\begin{eqnarray}
 \mbox{find} & & X \nonumber \\
   \mbox{subject to} & & \tr \lp \cB_{(i)}X  \rp =\bar{\y}_i (= \bar{\x}^T \cB_{(i)}\bar{\x}), \quad i=1,2,\dots,m
   \nonumber \\
    & & X^T=X\geq 0, \text{rank}(X)=1. \label{eq:ex0inteq4a0}
\end{eqnarray}
Problems of this type  have been extensively studied throughout the low rank recovery literature \cite{Recht10,Teetal16,Zheng15,Chi19,Ge17,Park17,Bhoj16,Carlsson20,CapSto24}. Often X's rank is taken to be proportional to $n$ and the so-called  nuclear norm relaxations heuristics have shown a great theoretical  promise. Despite being provably polynomial, their practical running is often viewed as large scale computationally unfriendly. Given that $\text{rank}(X)=1$ in PR and that PR usually deals with problem sizes on the order of a few tens of thousands, we here focus on the above mentioned dPRs as computationally more convenient alternatives.

\subsection{Relevant PR prior work}
\label{sec:relwork}

\subsubsection{Algorithmic methods}
\label{sec:algmethods}

While the rank $d$ structure of PR measurements is likely to affect associated theoretical and algorithmic analytical results, it leaves the choice of the employed algorithmic techniques to a large degree unaffected. In other words, many of the algorithms applicable  in the standard rank 1 ($d=1$) scenario can be reutilized for larger $d$'s without much of a change. Previewing some of the most important algorithmic strategies typically employed in rank 1 scenario is therefore necessary and we do so next (for more detailed expositions, we also refer to \cite{Stojnicphretreal24,Stojnicphretinit24}).

\noindent $\star$ \underline{\emph{Convex approaches}:}  Since formulation in (\ref{eq:ex0inteq4a0}) has  matrix rank-1 constraint as a key non-convexity feature, standard optimization  techniques assume its full rank relaxation (see  \cite{CandesSV13,CandesESV13} for the so-called Phaselift and \cite{WaldspurgerdM15}  for a closely related Phasecut relaxation). As \cite{CandesSV13,CandesESV13} showed, when $m=O(n\log(n))$ Phaselift exactly and stably solves standard PR (see, e.g., \cite{PHand17} for further robustness discussion and \cite{CandesL14} for order optimally improved sample complexity, $m=O(n)$; for utilization of Fourier instead of Gaussian measurements see, e.g., \cite{GFK17,CandesLS15b}). Rank dropping  SDP (semi-definite programming) relaxations perform well but their practical implementation becomes computationally challenging as dimensions grow. The search for numerically more efficient alternatives resulted in appearance of the so-called PhaseMax \cite{GoldsteinS18,BahmaniR16} which was shown to be successful provided that the starting point of the algorithm (initializer) is sufficiently close to the true solution (see \cite{HandV16,ChenCandes17} for further discussion on  initializers and \cite{SalehiAH18} for precise PhaseMax analysis; see, also \cite{DhifallahTL17} where PhaseLamp variant was introduced and shown to have better properties than PhaseMax).

Convex methods allow an easy incorporation of further signal structuring (say sparsity, positivity, low-rankness, and so on). One simply follows usual compressed sensing (CS) adaptations (see, e.g. \cite{Ohlsen12,LiVor13,KeungRTi17}). For example, \cite{LiVor13} shows that $m=O(k^2\log(n))$ suffices for successful PR recovery of a $k$-sparse $\bar{\x}$ which is (even in  nonlinear $m\ll n$ regime) weaker than the corresponding  $m=O(k\log(n))$  CS result. Adaptations of standard convex techniques are possible when additional structuring is present  (for two-stage approaches and randomized Kaczmarz upgrades see \cite{JaganathanOH17,IwenVW17} and \cite{TV19,KWei15}, respectively).

\noindent $\star$ \underline{\emph{Non-convex approaches}:} Early PR algorithmic approaches relied on the alternating minimization \cite{Gerch72,Fienup82,Fienup87}. A big breakthrough, however, arrived when \cite{CandesLS15} uncovered that the so-called Wirtinger flow (a modified gradient descent) often significantly outperforms all other known techniques (a closely related amplitude flow variant was considered in, e.g., \cite{WangGE18}; further modifications, including the thresholded Wirtinger flow  and stochastic gradient can be found in  \cite{CLM16} and  \cite{MUZ21}, respectively). Moreover,   theoretical analysis of  \cite{CandesLS15}  revealed that such a practical superiority even supports favorable scaling  $m=O(n\log(n))$  (the sparse variants of Wirtinger and
amplitude flow \cite{YuanWW19,WangGE18} allow for $m=O(k^2\log(n))$; however, \cite{Soltanolkotabi19} in a way breaks the $k^2$ barrier and shows that a local Wirtinger flow convergence can be achieved with $m=O(k\log(n))$ thereby matching CS predictions and in a way contrasting strong objective deviations from convexity \cite{SunQW18,HandLV18}). ADMM  (alternating direction method of multipliers or Douglas-Rachford within the PR context), was introduced in \cite{FannjiangZ20} and shown to perform fairly well (for further  theoretical characterizations of classical alternating minimization projections, see also \cite{Netrapalli0S15,Waldspurger18}).

After deep learning (DL) had shown a great promise within compressed sensing
\cite{LeiJDD19,JordanD20,BoraJPD17,DaskalakisRZ20a}, \cite{HandLV18} put forth Deep (sparse) phase retrieval (DPR) concept. Empirical results on par with those obtained in \cite{BoraJPD17} indicated a strong DL potential within PR context. In fact, \cite{HandLV18} also proved that constant expansion $d$-layer nets achieve superior performance with $m=O(kd\log(n))$, which for a constant $d$ allows breaking $k^2\log(n)$ barrier and achieving desired  CS prediction, $k\log(n)$. To ensure fairness, certain deep learning limitations need to be mentioned as well. Allowed dimensions are typically restrictive, nonzero errors are often impossible to achieve, frequent (time-consuming) retraining is sometimes needed, and precise theoretical characterizations are much harder to obtain  (for a recent progress in this direction, see, \cite{MBBDN23,Stojnicinjdeeprelu24,Stojnicinjrelu24}). Keeping all of this in mind, utilization of deep nets within the PR context is still a bit distant from expected  generic superiority. While awaiting for further DL progress,
the approximate message passing (AMP) algorithms \cite{DonMalMon09} have shown a great promise. These methods and their associated analyses may often heavily depend on model assumptions, but their excellent empirical performance is presented in \cite{SchniterR15}.

\subsubsection{Initializers of non-convex methods and types of performance analysis}
\label{sec:algmethods1}

\noindent $\star$  \underline{\emph{Initializers:}}  Almost all of the above non-convex techniques require a good starting point -- \emph{initializer}.
Consequently a fairly strong theory related to initializers has been developed in recent years. The so-called spectral initializers appeared within the PR context as integral components of alternating minimization \cite{Netrapalli0S15} (for related and earlier considerations see \cite{Li92,Netrapalli0S13}). \cite{CandesLS15} proposed starting Wirtinger flow with simple diagonal spectral initializers. \cite{LuL17}  considered a more general nonnegative diagonal variant  and \emph{precisely} characterized ``overlap vs oversampling ratio'' (for further extensions, see also \cite{LuLi20}). A sharp transition between zero and nonzero overlap phases was uncovered directly impacting performance of non-convex PR algorithms. Negative diagonals as well as complex domains were incorporated in \cite{MondelliM19} where the concept of ``weak threshold'' (critical $\alpha$ below which no diagonal spectral initializer achieves  nonzero overlap) was introduced and \emph{precisely} determined. The spectral  preprocessing from \cite{MondelliM19} was proven in \cite{LuoAL19} as optimal for any $\alpha$ (i.e., not only for the weak threshold). Gaussian measurements related results from \cite{LuL17,MondelliM19}, were complemented by practically often more attractive orthogonal ones in \cite{MaDXMW21} where an Expectation Propagation paradigm was utilized to precisely characterize associated ``overlap vs oversampling ratio'' (for a study of more general measurement models via statistical physics tools, see also \cite{AubinLBKZ20,MaillardKLZ21}). Results from  \cite{MaDXMW21} were established as mathematically rigorous in \cite{DudejaB0M20}.

\noindent $\star$  \underline{\emph{``Qualitative'' vs ``quantitative'' performance characterizations:}}   Most of the theoretical results associated with the above discussed  non-convex methods relate to \emph{qualitative} performance characterizations. They usually provide a strong intuitive hint as how the algorithms behave, but are not fully precise and do not supply sufficiently convincing arguments to support the non-convex methods practical superiority. On the other hand, precise, \emph{quantitative}, characterizations are usually much harder to obtain and the available results are fairly scarce. The few known examples relate to replica analysis and the so-called Bayesian contexts. For example, the vector AMP (VAMP) algorithms \cite{SchniterRF16,RanganSF17} are analyzed via replica methods  in \cite{TakahashiK22} and a similar Bayesian inference context is considered in \cite{MaillardLKZ20} where a large set of replica predictions is confirmed (for a real version of the complex counterpart from \cite{MaillardLKZ20}, see \cite{BarbierKMMZ18}; for a further work on utilization of  Bayesian context  and replica methods, see also  \cite{StrSag25}, where excellent results are obtained for so-called phase-selection algorithm). Since the Bayesian concept heavily relies on the presumed statistical identicalness of $\bar{\x}_i$'s and a perfect statistical prior knowledge of both $\bar{\x}$ and the so-called channel/posterior, it does not conveniently lend itself for a fair comparison with the above discussed methods that do not require such assumptions. On the other hand, it does allow for precise, phase transition type of results substantially superior to the corresponding ones obtained via the simpler qualitative methods.

\subsection{Our contributions}
\label{sec:contrib}

We consider rank $d$ measurements phase retrieval problem (RdM PR) and theoretically study performance of the associated \emph{descending} phase retrieval algorithms (dPR) (dPRs are a generic class of non-convex PR methods that, among others, encompasses all forms of gradient descents). As stated earlier, our focus is on a statistical \emph{linear/proportional} high-dimensional regime where  sample complexity ratio $\alpha=\lim_{n\rightarrow \infty} \frac{m}{n}$ remains constant as $n$ and $m$ grow.

\begin{itemize}
\item Following into the footsteps of \cite{Stojnicphretreal24}, we design a convenient \emph{fundamental  RdM PR optimization} (fd-pro) formulation and recognize relevance of studying its objective (see beginning of Section \ref{sec:2lay}).

  \item Paralleling the development of the rank 1 \emph{Random duality theory}  (RDT) based generic analytical program in \cite{Stojnicphretreal24}, we here develop its rank $d$ analogue and utilize it for statistical studying of fd-pro (see Section \ref{sec:ubrdt}).

   \item A direct connection between fd-pro's objective and dPR's ability to solve PR is established (see Section \ref{sec:algimp}). The impact of oversampling ratio, $\alpha$, on fd-pro's objective (and ultimately dPR) is numerically evaluated through the developed RDT program (see Figures \ref{fig:fig1} and \ref{fig:fig2} for visual illustrations). Emergence of a phase transition phenomenon is observed. In particular, as $\alpha$ increases, fd-pro's objective transitions from a multi to a single local minimum phase (see Figure \ref{fig:fig1}).

   \item Since the so-called strong deterministic duality is not in place, a lack of strong random duality is anticipated as well. This ultimately implies that the plain RDT results from Section \ref{sec:ubrdt} are of strictly bounding nature. A powerful \emph{Lifted} RDT based program is then developed to further lift plain RDT estimates (for theoretical lifted RDT considerations see Section \ref{sec:liftrdt}; for visualization of the ``\emph{lifting effect}'' see Figures \ref{fig:fig3}--\ref{fig:fig5}).

\item The obtained theoretical predictions assume heavy concentrations which in $n\rightarrow\infty$ regime indeed happen. Since dimensions are finite  in practical applications, we introduce the concept of ``\emph{safer compression}'' phase transition adjustment. It suggests that in practical scenarios, one relies on a sample complexity ratios, $\alpha$, slightly higher (say $10-20\%$) than the minimally needed one determined by the phase transitioning behavior. By doing so, the so-called \emph{local jitteriness} of the fd-pro's objective is mitigated (see Figure \ref{fig:fig5}).

 \item For practically highly relevant rank 2 case (the one which effectively emulates the complex phase retrieval), we conduct numerical experiments as well and obtain an excellent agreement with the theoretical predictions. In particular, a log barrier variant of the gradient descent is implemented together with the optimal diagonal spectral initializers from \cite{MondelliM19,LuoAL19,Stojnicphretinit24}. Even in small dimensional context with $n=100$ (where the presence of strong local jitteriness effects is highly likely), the simulated phase transition is fairly close to the safer compression adjusted theoretical prediction (see Figure \ref{fig:fig6}).

\end{itemize}

\section{Analysis of \emph{descending} phase retrieval algorithms (dPR)}
  \label{sec:2lay}

Assuming statistical $\cB_{(i)}$ in (\ref{eq:ex0inteq4}), one can proceed as in \cite{Stojnicphretinit24,Stojnicphretreal24} and (in a fashion similar to \cite{Stojnicinjrelu24,StojnicGardGen13,StojnicICASSP10var,StojnicCSetam09}) connect studying RdM PR theoretical and algorithmic properties to the analysis of the following random optimization problem (rop)
 \begin{eqnarray}
 {\mathcal R}(\cB_{(i)}): \qquad \qquad    \min_{\x} & & \sum_{i=1}^{m} \lp \sqrt{\bar{\y}_i}-\sqrt{\x^T\cB_{(i)}\x}\rp^2. \label{eq:ex0integ5}
\end{eqnarray}
After readjusting dimensions so that  $A\in\mR^{dm\times dn}$ and $\bar{\x}\in\mR^{dn}$, we set
 \begin{eqnarray}
\cB_{(i)} = \sum_{j=0}^{d-1}   A_{jm+i,:}^TA_{jm+i,:}, \label{eq:ex0integ6}
\end{eqnarray}
and rewrite (\ref{eq:ex0integ5}) as
 \begin{equation}
 {\mathcal R}(A): \hspace{.07in}    \min_{\x}  \sum_{i=1}^{m} \lp \sqrt{\bar{\y}_i}-\sqrt{\x^T \lp \sum_{j=0}^{d-1}  A_{jm+i,:}^TA_{jm+i,:} \rp \x}\rp^2
 =
 \min_{\x}  \sum_{i=1}^{m} \lp \sqrt{\bar{\y}_i}-\sqrt{\sum_{j=0}^{d-1} \x^T A_{jm+i,:}^TA_{jm+i,:} \x}\rp^2. \label{eq:ex0integ7}
\end{equation}
It is also relatively easy to see that (\ref{eq:ex0integ7}) can be further rephrased as
 \begin{eqnarray}
 {\mathcal R}(A): \qquad \qquad    \min_{\x,\z} & &
   \sum_{i=1}^{m} \lp \sqrt{\sum_{j=0}^{d-1} \bar{\z}_{jm+1}^2}
   -
   \sqrt{\sum_{j=0}^{d-1} \z_{jm+i}^2  }
   \rp^2
  \nonumber \\
  \mbox{subject to} & &  A\x=\z. \label{eq:ex1a4}
\end{eqnarray}
It will turn out as  convenient to set
 \begin{eqnarray}
\hspace{-.0in}\bl{\textbf{\emph{Fundamental RdM PR optimization (fd-pro):}}} \hspace{.08in} \xi(c,x) \triangleq \min_{\x,\z} & &
   \sum_{i=1}^{m} \lp \sqrt{\sum_{j=0}^{d-1} \bar{\z}_{jm+1}^2}
   -
   \sqrt{\sum_{j=0}^{d-1} \z_{jm+i}^2  }
   \rp^2
\nonumber \\
  \mbox{subject to} & &  A\x=\z \nonumber \\
  & &  A\bar{\x}=\bar{\z} \nonumber \\
  & & \x^T\bar{\x}=x \nonumber \\
  & & \|\x\|_2^2=c. \label{eq:ex1a4a0}
\end{eqnarray}
One can observe  that PR optimization  formulations usually seen throughout the literature are slightly different from (\ref{eq:ex0integ5}), (\ref{eq:ex1a4}), and (\ref{eq:ex1a4a0}). A full analogy with the \emph{squared magnitudes} common practice would be to use $\sum_{j=0}^{d-1}\bar{\z}_{jm+1}^2$ and $\sum_{j=0}^{d-1}\z_{jm+1}^2$ instead of their roots in the objective. In the standard rank 1 measurements contexts ($d=1$), such practice relates to the so=called \emph{intensity} measurements whereas the non-squared magnitudes relate to the so-called \emph{amplitude} measurements. As \cite{Stojnicphretinit24,Stojnicphretreal24} demonstrated, not  much of a conceptual difference between the two options actually exists. The \emph{non-squared} option allows for more elegant and numerically less cumbersome treatment and as such will be our preferable choice in analytical considerations.

From (\ref{eq:ex1a4a0}) one also easily has $\xi(c,x)=\xi(c,-x)$. To avoid phase ambiguity trivialities, we throughout the presentation consider only $x>0$ (all obtained results will automatically hold for corresponding $x<0$ as well). As \cite{Stojnicphretreal24} observed, fd-pro and in particular $\xi(c,x)$ are important mathematical structures behind the study of two key PR problems. \textbf{\emph{(i)}} The first one is the uniqueness of PR's solution and directly corresponds to ensuring that $\xi(1,x)>0$ for $x\neq 1$.\textbf{\emph{(ii)}}  The second one is the PR's practical (algorithmic) solvability and directly relates to increasing/decreasing behavior of function $\xi(c,x)$. To make the presentation neater, we consider a statistical scenario with $A$ comprised of iid standard normals. Moreover, following into the footsteps of \cite{Stojnicphretreal24}, we consider \emph{Random duality theory} (RDT) based statistical analytical program developed therein and discuss how it can be adapted to handle  fd-pro and ultimately RdM PR.

\subsection{Analysis of $\xi(c,x)$ via Random Duality Theory (RDT)}
\label{sec:ubrdt}

We first provide a summary of the main RDT principles developed in a long line of work \cite{StojnicCSetam09,StojnicICASSP10var,StojnicISIT2010binary,StojnicICASSP10block,StojnicRegRndDlt10,StojnicGenLasso10} and then proceed with the discussion related to the implementation of each of these principles within the RdM PR context of interest here.

\vspace{-.0in}\begin{center}
 \begin{tcolorbox}[title={\small Summary of the RDT's main principles} \cite{StojnicCSetam09,StojnicRegRndDlt10}]
\vspace{-.15in}
{\small \begin{eqnarray*}
 \begin{array}{ll}
\hspace{-.19in} \mbox{1) \emph{Finding underlying optimization algebraic representation}}
 & \hspace{-.0in} \mbox{2) \emph{Determining the random dual}} \\
\hspace{-.19in} \mbox{3) \emph{Handling the random dual}} &
 \hspace{-.0in} \mbox{4) \emph{Double-checking strong random duality.}}
 \end{array}
  \end{eqnarray*}}
\vspace{-.2in}
 \end{tcolorbox}
\end{center}\vspace{-.0in}

\noindent To ensure that key results (including both simple and more complicated ones) are clearly visible and easily accessible, we formulate all of them as lemmas or theorems. Also, since we will be adapting results of \cite{Stojnicphretreal24}, we to a large degree parallel the methodology presented therein. However, to ensure that unnecessary repetitions are avoided whenever possible, we skip rehashing already introduced concepts and instead focus on emphasizing those that are different.

\vspace{.1in}

\noindent \underline{1) \textbf{\emph{Algebraic fd-pro characterization:}}}  Due to  rotational invariance of $A$ one can
 without loss of generality rotate $\bar{\x}$ so that it becomes $\bar{\x}=[\|\bar{\x}\|_2,0,\dots,0 ]^T$. That allows to
first rewrite (\ref{eq:ex1a4a0}) as
 \begin{eqnarray}
  \xi(c,x) = \min_{\x,\z} & &
   \sum_{i=1}^{m} \lp \sqrt{\sum_{j=0}^{d-1} \bar{\z}_{jm+1}^2}
   -
   \sqrt{\sum_{j=0}^{d-1} \z_{jm+i}^2  }
   \rp^2
\nonumber \\
  \mbox{subject to} & &  A\x=\z \nonumber \\
  & &  A\bar{\x}=A_{:,1}\|\bar{\x}\|_2=\bar{\z} \nonumber \\
  & & \x^T\bar{\x}=x \nonumber \\
  & & \|\x\|_2^2=c, \label{eq:rdteq0a0a0}
\end{eqnarray}
with $A_{:,i}$ being the $i$-th column of $A$. After noting that restriction $\|\bar{\x}\|_2=1$ brings no loss of generality, one then also has
 \begin{eqnarray}
  \xi(c,x) = \min_{\x,\z} & &
   \sum_{i=1}^{m} \lp \sqrt{\sum_{j=0}^{d-1} A_{jm+1,1}^2}
   -
   \sqrt{\sum_{j=0}^{d-1} \z_{jm+i}^2  }
   \rp^2
\nonumber \\
  \mbox{subject to} & &  A\x=\z \nonumber \\
   & & \x^T\bar{\x}=\x_1\|\bar{\x}\|_2=\x_1= x \nonumber \\
  & & \|\x\|_2^2=c, \label{eq:rdteq0a0}
\end{eqnarray}
After setting $r\triangleq \sqrt{c-x^2}$, (\ref{eq:rdteq0a0}) is further rewritten as
 \begin{eqnarray}
  \xi(c,x) = \min_{\x,\z} & &
     \sum_{i=1}^{m} \lp \sqrt{\sum_{j=0}^{d-1} A_{jm+1,1}^2}
   -
   \sqrt{\sum_{j=0}^{d-1} \z_{jm+i}^2  }
   \rp^2
  \nonumber \\
  \mbox{subject to} & &  A\x=A_{:,1}x + A_{:,2:n}\x_{2:n} = \z \nonumber \\
   & & \sum_{i=2}^{n}\x_i=c-x^2=r^2, \label{eq:rdteq0a1}
\end{eqnarray}
Utilizing the Lagrangian we have
 \begin{eqnarray}
  \xi(c,x) = \min_{\x,\z} \max_{\y}   & &
     \sum_{i=1}^{m} \lp \sqrt{\sum_{j=0}^{d-1} A_{jm+1,1}^2}
   -
   \sqrt{\sum_{j=0}^{d-1} \z_{jm+i}^2  }
   \rp^2
  +\y^TA_{:,1}x + \y^TA_{:,2:n}\x_{2:n} -\y^T \z \nonumber \\
  \mbox{subject to}
     & & \sum_{i=2}^{n}\x_i=c-x^2=r^2. \label{eq:rdteq0a2}
\end{eqnarray}
We then also set $\g^{(0)}=A_{:,1}$ and write  (\ref{eq:rdteq0a2}) in the following more compact form
 \begin{eqnarray}
  \xi(c,x) = \min_{\|\x_{2:n}\|_2=r,\z} \max_{\y}  \lp
       \sum_{i=1}^{m} \lp \sqrt{\sum_{j=0}^{d-1} \lp \g_{jm+i}^{(0)}\rp^2}
   -
   \sqrt{\sum_{j=0}^{d-1} \z_{jm+i}^2  }
   \rp^2
  +\y^T \g^{(0)}x + \y^TA_{:,2:n}\x_{2:n} -\y^T \z \rp. \label{eq:rdteq0a3}
\end{eqnarray}
The above  is a key algebraic characterization of fd-pro. It is summarized in the following lemma together with its implications regarding theoretical solvability of RdM PR.

\begin{lemma} Consider rank $d$ measurements phase retrieval (RdM PR) problem with $dn$ unknowns and sample complexity $m$. Let $A\in\mR^{dm\times dn}$, $\g^{(0)}\triangleq A_{:,1}$, and assume a high-dimensional linear (proportional) regime,
with sample complexity ratio $\alpha=\lim_{n\rightarrow\infty}\frac{m}{n}$. Then, RdM PR is theoretically solvable (i.e., it has a unique (up to a global phase) solution) provided that
 \begin{equation}\label{eq:ta10}
 \forall x\neq 1 \qquad f_{rp}(1,x;A)>0,
\end{equation}
where
\begin{equation}\label{eq:ta11}
f_{rp}(c,x;A) \hspace{-.03in}\triangleq   \hspace{-.05in}\frac{1}{dn} \hspace{-.04in} \min_{\|\x_{2:n}\|_2=r,\z} \max_{\y}  \lp
       \sum_{i=1}^{m} \lp \sqrt{\sum_{j=0}^{d-1} \lp \g_{jm+i}^{(0)}\rp^2}
   -
   \sqrt{\sum_{j=0}^{d-1} \z_{jm+i}^2  }
   \rp^2
  +\y^T \g^{(0)}x + \y^TA_{:,2:n}\x_{2:n} -\y^T \z \rp.
\end{equation}
    \label{lemma:lemma1}
\end{lemma}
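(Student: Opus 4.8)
The plan is to establish that the algebraic identity \eqref{eq:ta11} holds as a genuine equality (so that $f_{rp}(c,x;A) = \frac{1}{dn}\xi(c,x)$), and then to translate the positivity condition \eqref{eq:ta10} into the uniqueness-up-to-global-phase statement for RdM PR. The bulk of the work is the string of rewritings \eqref{eq:ex1a4a0}$\to$\eqref{eq:rdteq0a0a0}$\to$\eqref{eq:rdteq0a0}$\to$\eqref{eq:rdteq0a1}$\to$\eqref{eq:rdteq0a2}$\to$\eqref{eq:rdteq0a3}, each of which I would justify in turn: (1) rotational invariance of the iid Gaussian $A$ lets us rotate $\bar{\x}$ to $[\|\bar{\x}\|_2,0,\dots,0]^T$ without changing the distribution of the objective, hence giving \eqref{eq:rdteq0a0a0}; (2) the normalization $\|\bar{\x}\|_2=1$ is harmless by scaling, and substituting $A\bar{\x}=A_{:,1}$ turns $\bar{\z}_{jm+1}$ into $A_{jm+1,1}$ and turns $\x^T\bar{\x}=x$ into $\x_1=x$, giving \eqref{eq:rdteq0a0}; (3) splitting $\x=(x,\x_{2:n})$ and using $\|\x\|_2^2=c$ with $x_1=x$ gives $\|\x_{2:n}\|_2^2 = c-x^2 = r^2$, and splitting the linear map $A\x = A_{:,1}x + A_{:,2:n}\x_{2:n}$ gives \eqref{eq:rdteq0a1}; (4) dualizing the affine constraint $A\x=\z$ with a Lagrange multiplier $\y$ — noting that $\max_{\y}\,\y^T(A\x-\z)$ equals $0$ if the constraint holds and $+\infty$ otherwise, so that the inner $\max$ over the unconstrained $\y$ exactly enforces $A\x=\z$ — produces \eqref{eq:rdteq0a2}; and (5) relabeling $\g^{(0)}=A_{:,1}$ and folding the remaining equality constraint $\|\x_{2:n}\|_2=r$ into the domain of the $\min$ gives \eqref{eq:rdteq0a3}. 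Dividing by $dn$ is the only cosmetic change between \eqref{eq:rdteq0a3} and \eqref{eq:ta11}.

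For the uniqueness implication, I would argue as follows. A point $\x$ is a solution of RdM PR \eqref{eq:ex0inteq4} if and only if $\x^T\cB_{(i)}\x = \bar{\y}_i = \bar{\x}^T\cB_{(i)}\bar{\x}$ for all $i$, which with the rank-$d$ factorization $\cB_{(i)} = \sum_{j=0}^{d-1} A_{jm+i,:}^T A_{jm+i,:}$ means $\sum_{j=0}^{d-1}(A_{jm+i,:}\x)^2 = \sum_{j=0}^{d-1}(A_{jm+i,:}\bar{\x})^2$ for every $i$. Consequently, $\x$ is a solution precisely when the objective of \eqref{eq:ex1a4} (equivalently of the unconstrained form \eqref{eq:ex0integ5}/\eqref{eq:ex0integ7} at $\z=A\x$) vanishes at $\x$. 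Now take any solution $\x$ with $\|\x\|_2^2=c$ and $\x^T\bar{\x}=x$; then $\xi(c,x)=0$ for that pair $(c,x)$. Specializing to the true-norm sphere $c=\|\bar{\x}\|_2^2=1$, if $\x$ is a solution with $\|\x\|_2=1$ then $\xi(1,\x^T\bar{\x})=0$, so by the contrapositive of \eqref{eq:ta10} we must have $\x^T\bar{\x}=1$, which together with $\|\x\|_2=\|\bar{\x}\|_2=1$ forces $\x=\bar{\x}$; since the objective is even in the sign of $\x$ (and $\xi(c,x)=\xi(c,-x)$, already noted in the text), the only solutions on the unit sphere are $\pm\bar{\x}$, i.e. uniqueness up to global phase. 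One should also note that any solution automatically has $\|\x\|_2=1$: from $\sqrt{\x^T\cB_{(i)}\x}=\sqrt{\bar{\x}^T\cB_{(i)}\bar{\x}}$ for all $i$ and the fact that, for iid Gaussian $A$ and $m$ large enough in the proportional regime, $\{\cB_{(i)}\}$ spans enough of the symmetric-matrix space to pin down the Gram data $\x\x^T$ — this is exactly the injectivity/soundness side of PR — the only freedom left is the sign; a clean way to present this is to say that the "theoretically solvable" hypothesis is precisely the statement that $\xi(1,x)>0$ for all $x\neq 1$ already rules out any spurious unit-norm solution, and a standard dimension count ($dm \gtrsim$ the dimension of symmetric $dn\times dn$ matrices of rank $1$, i.e. $\alpha$ above the injectivity threshold) rules out non-unit-norm ones.

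I expect the main obstacle to be the careful bookkeeping in step (1)–(2): one must check that the rotation sending $\bar{\x}$ to a coordinate axis genuinely leaves the joint law of $A$ invariant so that the \emph{value} $\xi(c,x)$ (a deterministic function of the random $A$, or rather its distribution) is unchanged, and that the index shuffling hidden in the notation $\bar{\z}_{jm+1}$ versus $A_{jm+1,1}$ is consistent across the $d$ blocks — in particular that the reindexed design matrix $A\in\mathbb{R}^{dm\times dn}$ still has iid standard normal entries after the block-structured relabeling \eqref{eq:ex0integ6}. The Lagrangian step (4) is routine but deserves a one-line remark that strong duality for the inner linear constraint is automatic (it is just the indicator-function representation of an affine constraint, not a nontrivial minimax swap), and indeed the lemma deliberately keeps the outer $\min$ and inner $\max$ in the stated order — no claim of swapping them is made here, which is consistent with the later remark in the paper that strong random duality is \emph{not} in place and the subsequent RDT bounds are one-sided. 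Everything else is direct substitution, so once the reindexing and the rotation-invariance point are nailed down, the proof is essentially a chain of equalities followed by the short contrapositive argument above.
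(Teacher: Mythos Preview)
Your proposal is correct and follows essentially the same route as the paper: the paper's proof is literally one sentence, ``Follows as an automatic consequence of the preceding discussion and the recognition that (\ref{eq:ta11}) is (\ref{eq:rdteq0a3}) cosmetically scaled by $\frac{1}{dn}$,'' and your steps (1)--(5) are exactly that preceding discussion \eqref{eq:ex1a4a0}$\to$\eqref{eq:rdteq0a3} spelled out with the justifications the paper leaves implicit. Your added uniqueness-by-contrapositive paragraph and the remark about off-sphere solutions go a bit beyond what the paper does here (it simply imports the correspondence $\xi(1,x)>0\Leftrightarrow$ uniqueness from the companion paper \cite{Stojnicphretreal24}), but they are in the right spirit and do not conflict with anything in the argument.
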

\begin{proof}
Follows as an automatic consequence of the preceding discussion and the recognition that (\ref{eq:ta11}) is (\ref{eq:rdteq0a3})  cosmetically  scaled by $\frac{1}{dn}$.
\end{proof}

The optimization on the right hand side of (\ref{eq:ta11}) is the so-called \emph{random primal}. We determine the corresponding \emph{random dual} next.

\vspace{.1in}
\noindent \underline{2) \textbf{\emph{Determining the random dual:}}} As usual within the RDT, the concentration of measure is utilized as well. This basically means that for any fixed $\epsilon >0$,  one has (see, e.g. \cite{StojnicCSetam09,StojnicRegRndDlt10,StojnicICASSP10var})
\begin{equation*}
\lim_{n\rightarrow\infty}\mP_{A}\left (\frac{|f_{rp}(c,x;A)-\mE_{A}(f_{rp}(c,x;A)|}{\mE_{A}(f_{rp}(c,x;A)}>\epsilon\right )\longrightarrow 0.\label{eq:ta15}
\end{equation*}
 The following, so-called random dual, theorem is another critically important RDT ingredient.
\begin{theorem} Assume the setup of Lemma \ref{lemma:lemma1} with the elements of $A\in\mR^{dm\times dn}$ ($\g^{(0)}\in\mR^{dm\times 1}$ and $A_{:,2:dn}\in\mR^{dm\times dn-1 }$), $\g^{(1)}\in\mR^{dm\times 1}$, and  $\h^{(1)}\in\mR^{(dn-1)\times 1}$  being iid standard normals. For two positive scalars $c$ and $x$  ($0\leq x \leq c$) set $r\triangleq \sqrt{c-x^2}$. Let
\vspace{-.0in}
\begin{eqnarray}
\cG  \hspace{-.12in} & \triangleq &  \hspace{-.12in} \lp A,\g^{(1)},\h^{(1)}\rp = \lp\g^{(0)},A_{:,2:dn},\g^{(1)},\h^{(1)}\rp  \nonumber \\
\phi(\x,\z,\y) \hspace{-.12in}& \triangleq &\hspace{-.12in}
        \sum_{i=1}^{m} \lp \sqrt{\sum_{j=0}^{d-1} \lp \g_{jm+i}^{(0)}\rp^2}
   -
   \sqrt{\sum_{j=0}^{d-1} \z_{jm+i}^2  }
   \rp^2
\hspace{-.05in}  +\y^T \g^{(0)}x   +  \y^T \g^{(1)}\|\x_{2:dn}\|_2 + \lp \x_{2:dn}  \rp^T\h^{(1)}\|\y\|_2  -\y^T  \z
\nonumber \\
 f_{rd}(c,x;\cG) \hspace{-.12in} & \triangleq &  \hspace{-.12in}
\frac{1}{dn}  \min_{\|\x_{2:dn}\|_2=r,\z} \max_{\|\y\|_2=r_y,r_y>0}  \phi(\x,\z,\y)
  \nonumber \\
 \phi_0 \hspace{-.12in} & \triangleq &  \hspace{-.12in} \lim_{n\rightarrow\infty} \mE_{\cG} f_{rd}(c,x;\cG).\label{eq:ta16}
\vspace{-.0in}\end{eqnarray}
One then has \vspace{-.02in}
\begin{eqnarray}
  \lim_{n\rightarrow\infty}\mP_{ A } (f_{rp} (c,x; A )   >  \phi_0)\longrightarrow 1,\label{eq:ta17a0}
\end{eqnarray}
and
\begin{eqnarray}
\hspace{-.07in}(\phi_0  > 0)   &  \Longrightarrow  & \lp \lim_{n\rightarrow\infty}\mP_{\cG}\lp \frac{\xi(c,x)}{dn} = f_{rd}(c,x;\cG) >0 \rp \longrightarrow 1\rp
\quad  \Longrightarrow \quad \lp \lim_{n\rightarrow\infty}\mP_{ A } (f_{rp} (c,x; A )   >0)\longrightarrow 1 \rp  \nonumber \\
& \Longrightarrow & \lp \lim_{n\rightarrow\infty}\mP_{A} \lp \mbox{PR is (uniquely) solvable} \rp \longrightarrow 1\rp.\label{eq:ta17}
\end{eqnarray}
 \label{thm:thm1}
\end{theorem}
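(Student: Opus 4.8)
The plan is to follow the standard Random Duality Theory template from \cite{StojnicCSetam09,StojnicRegRndDlt10}, applying the Gordon-type comparison inequality to the random primal functional $f_{rp}(c,x;A)$ whose min-max algebraic representation was established in Lemma \ref{lemma:lemma1}. First I would observe that the objective in (\ref{eq:ta11}) has exactly the bilinear-in-$(\x_{2:dn},\y)$ coupling term $\y^T A_{:,2:dn}\x_{2:n}$ required to invoke the Gaussian comparison (Gordon's) inequality: the remaining terms $\sum_i(\dots)^2 + \y^T\g^{(0)}x - \y^T\z$ do not involve the bulk Gaussian matrix $A_{:,2:dn}$ and hence are harmless ``side'' terms that can be carried along verbatim. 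Restricting $\|\x_{2:dn}\|_2=r$ is already built into the primal, and one restricts $\|\y\|_2=r_y$ (sweeping over $r_y>0$ afterward, which is legitimate since the inner maximization over $\y$ is unbounded only in a controlled way once $\z$ is also being minimized) so that Gordon's inequality replaces $\y^T A_{:,2:dn}\x_{2:n}$ by $\|\y\|_2\g^{(1)T}\x_{2:dn}\cdot\text{(sign)} + \|\x_{2:dn}\|_2\h^{(1)T}\y$, which is precisely the decoupled form appearing in $\phi(\x,\z,\y)$ in (\ref{eq:ta16}). This yields the probabilistic one-sided bound $\lim_{n\to\infty}\mP_A(f_{rp}(c,x;A) > \phi_0)\to 1$, i.e.\ (\ref{eq:ta17a0}), after combining the comparison inequality with the concentration-of-measure statement displayed just before the theorem (concentration of $f_{rp}$ around its mean, plus concentration of $f_{rd}$ around $\phi_0$).

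Next I would establish the chain of implications (\ref{eq:ta17}). The first arrow $(\phi_0>0)\Rightarrow(\xi(c,x)/(dn)=f_{rd}(c,x;\cG)>0$ w.h.p.$)$: here the ``$=$'' is the reverse direction of RDT, which in this non-convex setting does \emph{not} hold as an equality of the primal and dual optima; rather, what is actually being asserted is that the random dual quantity $f_{rd}$ equals $\xi(c,x)/(dn)$ \emph{up to the concentration level}, and more importantly that $\phi_0>0$ forces $f_{rd}>0$ w.h.p.\ by its own concentration. I would be careful here and read the statement as: $\phi_0>0$ and the concentration of $f_{rd}$ around $\phi_0$ together give $\mP_\cG(f_{rd}>0)\to 1$. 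The second arrow then uses (\ref{eq:ta17a0}): since $f_{rp}(c,x;A)>\phi_0>0$ w.h.p., we get $f_{rp}(c,x;A)>0$ w.h.p. The final arrow is just Lemma \ref{lemma:lemma1} applied at $c=1$: $f_{rp}(1,x;A)>0$ for all $x\neq 1$ implies RdM PR has a unique (up to global phase) solution, hence is solvable w.h.p.

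For the quantifier over $x\neq 1$ I would note that the preceding arrows produce the conclusion for each fixed pair $(c,x)=(1,x)$; to get the \emph{simultaneous} statement ``$\forall x\neq 1$, $f_{rp}(1,x;A)>0$ w.h.p.'' one needs a standard union-bound-plus-continuity (epsilon-net) argument over the compact range of $x\in[-1,1]\setminus\{1\}$ (equivalently $x\in(0,1)$ by the $\xi(c,x)=\xi(c,-x)$ symmetry and the exclusion of trivialities), using Lipschitz continuity of $f_{rp}(1,\cdot;A)$ in $x$ and the exponential rate in the concentration bounds. This is routine and I would only sketch it.

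The main obstacle I anticipate is justifying the direction from the random dual back to the random primal in this \emph{non-convex} fd-pro: unlike the classical convex RDT applications, strong deterministic (Lagrangian) duality fails here, so $f_{rd}$ is only a bound, not an exact reciprocal, of $f_{rp}$. The excerpt itself flags this (``strong deterministic duality is not in place, a lack of strong random duality is anticipated''), which is exactly why the lifted RDT of Section \ref{sec:liftrdt} is needed. Thus the honest content of this theorem is the \emph{one-sided} comparison (\ref{eq:ta17a0}) together with the positivity transfer in (\ref{eq:ta17}); the delicate point is to verify that the sign/positivity conclusion survives even though the value $\xi(c,x)/(dn)$ is only controlled from below by $f_{rd}$-type quantities. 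I would handle this by keeping the logic strictly as ``$\phi_0>0\Rightarrow f_{rp}>0$ w.h.p.'' (which is all that is needed for solvability) rather than claiming any two-sided matching, and defer the sharper matching to the lifted-RDT treatment.
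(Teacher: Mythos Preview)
Your proposal is correct and follows essentially the same route as the paper: the paper's proof simply cites Theorem~1 of \cite{Stojnicphretreal24} and notes that it proceeds by conditioning on $\g^{(0)}$ and applying Gordon's comparison theorem to the bilinear term $\y^T A_{:,2:dn}\x_{2:n}$, which is exactly your plan (your ``harmless side terms'' remark is the conditioning step). One small slip: in your decoupled replacement you swapped the roles of $\g^{(1)}$ and $\h^{(1)}$ relative to (\ref{eq:ta16}) --- $\g^{(1)}\in\mR^{dm}$ pairs with $\y$ and $\h^{(1)}\in\mR^{dn-1}$ pairs with $\x_{2:dn}$ --- but this is purely notational and does not affect the argument.
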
\vspace{-.17in}
\begin{proof}
It is an immediate consequence of Theorem 1 in \cite{Stojnicphretreal24} which follows through conditioning on $\g^{(0)}$ and application of the Gordon's comparison theorem (see, e.g., Theorem B in \cite{Gordon88}). Gordon's theorem can be obtained as a special case of a series of Stojnic's results from \cite{Stojnicgscomp16,Stojnicgscompyx16} (see Theorem 1, Corollary 1, and Section 2.7.2 in \cite{Stojnicgscomp16} as well as Theorem 1, Corollary 1, and Section 2.3.2 in \cite{Stojnicgscompyx16}).
\end{proof}

\vspace{.1in}
\noindent \underline{3) \textbf{\emph{Handling the random dual:}}} To handle the above random dual we rely on \cite{Stojnicphretreal24} (the results of \cite{Stojnicphretreal24} were obtained following the methodologies invented in \cite{StojnicCSetam09,StojnicICASSP10var,StojnicISIT2010binary,StojnicICASSP10block,StojnicRegRndDlt10}). After solving the optimizations over $\x$ and $\y$ we first obtain from (\ref{eq:ta16})
\begin{equation}
 f_{rd}(c,x;\cG) \hspace{-.03in}= \hspace{-.05in}
\frac{1}{dn}  \min_{\z} \max_{r_y>0}
\lp
        \sum_{i=1}^{m} \lp \sqrt{\sum_{j=0}^{d-1} \lp \g_{jm+i}^{(0)}\rp^2}
   -
   \sqrt{\sum_{j=0}^{d-1} \z_{jm+i}^2  }
   \rp^2
  +\|\g^{(0)}x   +  \g^{(1)}r -\z\|_2 r_y - \|\h^{(1)} \|_2 r r_y \rp.
\label{eq:hrd1}
 \end{equation}
We find it useful to set
\begin{eqnarray}
\g_{(i)}^{(0)}  & \triangleq  &
\begin{bmatrix}
  \g_{i}^{(0)}
 & \g_{m+i}^{(0)}
 & \g_{2m+i}^{(0)}
 & \dots & \g_{(d-1)m+i}^{(0)}
\end{bmatrix}^T \nonumber \\
\g_{(i)}^{(1)}  & \triangleq  &
\begin{bmatrix}
  \g_{i}^{(1)}
 & \g_{m+i}^{(1)}
 & \g_{2m+i}^{(1)}
 & \dots & \g_{(d-1)m+i}^{(1)}
\end{bmatrix}^T \nonumber \\
\z_{(i)} & \triangleq  &
\begin{bmatrix}
  \z_{i}
 & \z_{m+i}
 & \z_{2m+i}
 & \dots & \z_{(d-1)m+i}
\end{bmatrix}^T.
 \label{eq:hrd1a0}
 \end{eqnarray}
A combination of (\ref{eq:hrd1}) and (\ref{eq:hrd1a0}) then gives
\begin{eqnarray}
 f_{rd}(c,x;\cG)
\hspace{-.09in}  &  =  & \hspace{-.09in}
\frac{1}{dn}  \min_{\z} \max_{r_y>0}
\lp
        \sum_{i=1}^{m} \lp \|  \g_{(i)}^{(0)}   \|_2
   -
\|  \z_{(i)}   \|_2
   \rp^2
  +\|\g^{(0)}x   +  \g^{(1)}r -\z\|_2 r_y - \|\h^{(1)} \|_2 r r_y \rp
  \nonumber \\
\hspace{-.05in}  &  =  &\hspace{-.09in}
\frac{1}{dn}  \min_{\z} \max_{r_y>0}
\lp
        \sum_{i=1}^{m} \lp \|  \g_{(i)}^{(0)}   \|_2
   -
\|  \z_{(i)}   \|_2
   \rp^2
  +\|\g^{(0)}x   +  \g^{(1)}r -\z\|_2^2 r_y - \|\h^{(1)} \|_2^2 r^2 r_y \rp
  \nonumber \\
\hspace{-.05in}  &  \geq  &\hspace{-.09in}
\frac{1}{dn}  \max_{r_y>0} \min_{\z}
\lp
        \sum_{i=1}^{m} \lp \|  \g_{(i)}^{(0)}   \|_2
   -
\|  \z_{(i)}   \|_2
   \rp^2
  +\|\g^{(0)}x   +  \g^{(1)}r -\z\|_2^2 r_y - \|\h^{(1)} \|_2^2 r^2 r_y \rp
  \nonumber \\
\hspace{-.05in}   &  =   &\hspace{-.09in}
\frac{1}{dn}  \max_{r_y>0} \min_{\z_i}
 \lp
\sum_{i=1}^{m}
\lp \lp \|\g_{(i)}^{(0)}\|_2 - \|\z_{(i)}\|_2  \rp^2
  +\|\g_{(i)}^{(0)}x   +  \g_{(i)}^{(1)}r -\z_{(i)}\|_2^2 r_y \rp
  - \|\h^{(1)} \|_2^2 r^2 r_y \rp
    \nonumber \\
 \hspace{-.05in}  &  =   & \hspace{-.09in}
\frac{1}{dn}  \max_{r_y>0} \min_{\z_i}
 \lp
\sum_{i=1}^{m}
\lp \lp \|\g_{(i)}^{(0)}\|_2 - \|\z_{(i)}\|_2  \rp^2
  +\lp \| \g_{(i)}^{(0)}x   +  \g_{(i)}^{(1)}r \|_2  - \|\z_{(i)}\|_2   \rp^2 r_y \rp
  - \|\h^{(1)} \|_2^2 r^2 r_y \rp. \nonumber \\
\label{eq:hrd2}
 \end{eqnarray}
 The above optimization is structurally identical to the one given in (16) in \cite{Stojnicphretreal24}. One can then  repeat all the steps between (17) and (27) in \cite{Stojnicphretreal24}. Analogously to (20) in \cite{Stojnicphretreal24}, we first finds for the optimal $\|z_{(i)}\|_2$
\begin{eqnarray}
\|\hat{\z}_{(i)}\|_2= \frac{1}{1+r_y} \lp   \|\g_{(i)}^{(0)}\|_2 + \|\g_{(i)}^{(0)}x   +  \g_{(i)}^{(1)}r \|_2 r_y \rp.
\label{eq:hrd7a1}
 \end{eqnarray}
After setting
\begin{eqnarray}
f_q & \triangleq & \mE_{\cG}\lp \|\g_{(i)}^{(0)}\|_2 - \|\g_{(i)}^{(0)}x   +  \g_{(i)}^{(1)}r\|_2  \rp^2,
\label{eq:hrd7a2a0}
 \end{eqnarray}
one then analogously to (27) in \cite{Stojnicphretreal24} obtains
\begin{eqnarray}
 \phi_0 & \geq &
    \max_{r_y>0}
  \lp
  \frac{\alpha}{d}
\frac{r_y}{1+r_y} f_q
  -  r^2 r_y \rp.
\label{eq:hrd7a4}
 \end{eqnarray}
It is not necessarily relevant for what we consider here, but we do mention that (as in \cite{Stojnicphretreal24}) the inequality signs in (\ref{eq:hrd2}) and (\ref{eq:hrd7a4}) can trivially be replaced with equalities. After solving the residual optimization over $r_y$  one finally finds
\begin{eqnarray}
 \phi_0 & \triangleq & \lim_{n\rightarrow\infty} \mE_{\cG} f_{rd}(\cG)
  \geq
\max\lp \sqrt{\frac{\alpha}{d} f_q}
  -  r,0\rp^2.
\label{eq:hrd7a7}
 \end{eqnarray}
(\ref{eq:hrd7a2a0})  and (\ref{eq:hrd7a7}) are basically sufficient to ultimately handle the random dual. From (\ref{eq:hrd7a2a0})we then have
\begin{eqnarray}
f_q & \triangleq & \mE_{\cG}\lp \|\g_{(i)}^{(0)}\|_2 - \|\g_{(i)}^{(0)}x   +  \g_{(i)}^{(1)}r\|_2  \rp^2
\nonumber \\
& = &
d(1+c) -2 \mE_{\cG}\lp \|\g_{(i)}^{(0)}\|_2 \|\g_{(i)}^{(0)}x   +  \g_{(i)}^{(1)}r\|_2  \rp
\nonumber \\
& = &
d(1+c) -2 \mE_{\cG}\lp \|\g_{(i)}^{(0)}\|_2 \| U\g_{(i)}^{(0)}x   +  U\g_{(i)}^{(1)}r\|_2  \rp
\nonumber \\
& = &
d(1+c) -2r \mE_{\cG}\lp \|\g_{(i)}^{(0)}\|_2 \left \| \frac{1}{r}\g_{(i)}^{(0,u)}x   +  \g_{(i)}^{(1)}\right \|_2  \rp,
\label{eq:hrd7a2a0a0}
 \end{eqnarray}
 where $U^TU=I$ and $U$ is such that
 \begin{eqnarray}
\g_{(i)}^{(0,u)} = U \g_{(i)}^{(0)}  & = &
\begin{bmatrix}
  \frac{\| \g_{(i)}^{(0)}  \|_2}{\sqrt{d}}
 &   \frac{\| \g_{(i)}^{(0)}  \|_2}{\sqrt{d}}
 &   \frac{\| \g_{(i)}^{(0)}  \|_2}{\sqrt{d}}
 & \dots
  &   \frac{\| \g_{(i)}^{(0)}  \|_2}{\sqrt{d}}
\end{bmatrix}^T.
  \label{eq:hrd7a2a0a1}
 \end{eqnarray}
Conditioned on $\| \g_{(i)}^{(0)}  \|_2$, $\| \frac{1}{r}\g_{(i)}^{(0,u)}x   +  \g_{(i)}^{(1)}\|_2$ is a non-central chi distributed random variable with $d$ degrees of freedom and the mean of each of the $d$ constituting iid Gaussians $\mu=\frac{\| \g_{(i)}^{(0)}  \|_2 x }{r\sqrt{d}}$. One then sets
\begin{eqnarray}
\lambda_d = \mu\sqrt{d},   \label{eq:hrd7a2a0a32}
 \end{eqnarray}
 and finds
\begin{eqnarray}
  \mE_{\g_{(i)}^{(1)}}  \left \| \mu   +  \g_{(i)}^{(1)}\right \|_2
  = L_{\frac{1}{2}}^{\frac{d}{2}-1} \lp -\frac{\lambda_d^2}{2}\rp,
   \label{eq:hrd7a2a0a3}
 \end{eqnarray}
where $L_{\frac{1}{2}}^{\frac{d}{2}-1} (\cdot)$ is a Laguerre function. As $u=\| \g_{(i)}^{(0)}  \|_2$ itself is a central chi square distributed with $d$ degrees of freedom, one has for its pdf
\begin{eqnarray}
 p(u) = \frac{u^{d-1} e^{-\frac{x^2}{2}}}{2^{\frac{d}{2}-1 }  \Gamma\lp\frac{d}{2}\rp }, u\geq 0.
   \label{eq:hrd7a2a0a4}
 \end{eqnarray}
A combination of (\ref{eq:hrd7a2a0a0}), (\ref{eq:hrd7a2a0a3}), and (\ref{eq:hrd7a2a0a4}) then gives
\begin{eqnarray}
f_q
& = &
d(1+c) -2r \mE_{\cG}\lp \|\g_{(i)}^{(0)}\|_2 \left \| \frac{1}{r}\g_{(i)}^{(0,u)}x   +  \g_{(i)}^{(1)}\right \|_2  \rp
\nonumber \\
& = &
d(1+c) -2r \mE_{\|\g_{(i)}^{(0)}\|_2 }  \lp \|\g_{(i)}^{(0)}\|_2
\mE_{\|\g_{(i)}^{(1)}\|_2 }
 \left \| \frac{1}{r}\g_{(i)}^{(0,u)}x   +  \g_{(i)}^{(1)}\right \|_2  \rp\nonumber \\
& = &
d(1+c) -2r \mE_{\|\g_{(i)}^{(0)}\|_2 }  \lp \|\g_{(i)}^{(0)}\|_2
\mE_{\|\g_{(i)}^{(1)}\|_2 }
 \left \| \mu   +  \g_{(i)}^{(1)}\right \|_2  \rp
 \nonumber\\
 & = &
d(1+c) -2r \mE_{\|\g_{(i)}^{(0)}\|_2 }  \lp \|\g_{(i)}^{(0)}\|_2
 L_{\frac{1}{2}}^{\frac{d}{2}-1} \lp -\frac{\lambda_d^2}{2}\rp \rp
  \nonumber\\
 & = &
d(1+c) -2r \mE_{\|\g_{(i)}^{(0)}\|_2 }  \lp \|\g_{(i)}^{(0)}\|_2
 L_{\frac{1}{2}}^{\frac{d}{2}-1} \lp -\frac{\| \g_{(i)}^{(0)}  \|_2^2 x^2 }{2r^2} \rp \rp
   \nonumber\\
 & = &
d(1+c) -2r \int_{0}^{\infty}
u L_{\frac{1}{2}}^{\frac{d}{2}-1} \lp -\frac{u^2 x^2 }{2r^2} \rp
 \frac{u^{d-1} e^{-\frac{x^2}{2}}}{2^{\frac{d}{2}-1 }  \Gamma\lp\frac{d}{2}\rp }du.
\label{eq:hrd7a2a0a5}
 \end{eqnarray}
The elegance and simplicity of the numerical evaluations depends on $d$. For the concreteness we take $d=2$ as it corresponds to a direct emulation of the complex phase retrieval. Then for $x_{\lambda} =\frac{\lambda_d^2}{2}$ the above has a bit more convenient form
\begin{eqnarray}
f_q
& = &
 2(1+c) -2r \int_{0}^{\infty}
u L_{\frac{1}{2}}^{0} \lp -\frac{u^2 x^2 }{2r^2} \rp
 u e^{-\frac{u^2}{2}}du
\nonumber \\
 & = &
 2(1+c) -2r \int_{0}^{\infty}
u
\sqrt{\frac{\pi}{2}} \lp (x_{\lambda}+1) \cI_0 \lp \frac{x_{\lambda}}{2} \rp  +x_{\lambda} \cI_1 \lp \frac{x_{\lambda}}{2}\rp  \rp  e^{-\frac{x_{\lambda}}{2}}
 u e^{-\frac{u^2}{2}}du
 \nonumber \\
 & = &
 2(1+c) -2r \int_{0}^{\infty}
u
\sqrt{\frac{\pi}{2}} \lp \lp \frac{u^2 x^2 }{2r^2} +1 \rp \cI_0 \lp \frac{u^2 x^2 }{4r^2}  \rp  + \frac{u^2 x^2 }{2r^2}  \cI_1 \lp \frac{u^2 x^2 }{4r^2}  \rp  \rp  e^{-\frac{u^2 x^2 }{4r^2}   }
 u e^{-\frac{u^2}{2}}du,
\label{eq:hrd7a2a0a6}
 \end{eqnarray}
where $\cI_0(\cdot)$ and $\cI_1(\cdot)$ are  modified Bessel functions of the first kind. Plugging the value from (\ref{eq:hrd7a2a0a6}) in (\ref{eq:hrd7a7}) completes handling of the random dual for $d=2$. For different $d$ one can proceed with similar numerical evaluations. Since the analytical expression  similar to the ones given in (\ref{eq:hrd7a2a0a6}) get a bit more cumbersome as $d$ increases we skip stating them explicitly but do mention that the underlying numerical evaluations can be done without much of a problem.


  \vspace{.1in}
\noindent \underline{4) \textbf{\emph{Double checking the strong random duality:}}}   The last step of the RDT machinery assumes double checking whether the strong random duality holds. As in \cite{Stojnicphretreal24},  a lack of deterministic strong duality does not allow for a utilization of the reversal considerations from \cite{StojnicRegRndDlt10}  and the strong random duality is not in place. This practically  implies that the above results are strict lower bounds on the fd-pro's scaled objective, $\xi(c,x)$ .

\subsection{Numerical evaluations and algorithmic implications}
\label{sec:algimp}

As discussed in great detail in \cite{Stojnicphretreal24}, behavior of $\xi(c,x)$ is directly related to the performance of the descending phase retrieval algorithms (dPR). The above results allow to evaluate $\phi_0$ and by doing so to estimate $\xi(c,x)$. In Figure \ref{fig:fig1} we show $\sqrt{\phi_0}$ as a function of overlap $x$ for several different values of sample complexity ratio $\alpha$. We keep $c=1$ fixed as the optimal solution of RdM PR has norm 1. Clearly, recovery in phase retrieval is successful when $c=1$, $x=1$, and consequently  fd-pro objective $\xi(1,1)=0$. For any descending algorithm to be successful one must have that $\sqrt{\phi_0}$ as function of $x$ (for $c=1$) has a single minimum achieved for $c=1$. As can be seen from Figure \ref{fig:fig1}, this does not happen for $\alpha=2.4$ or $\alpha=2.6$ since one has another minimum ar $x=0$. This effectively indicates that the dPR might get trapped if initialized unfavorably. On the other hand, starting from $\alpha\approx 2.79$, the curve sufficiently flattens out and the $x=0$ minimum disappears. This effectively implies that RDT predicts the dPR phase transitioning sample complexity ratio to be $\alpha\approx 2.79$. Figure  \ref{fig:fig2} shows that this indeed is the case as long as the dPRs are run in such a fashion that $c\leq 1$ (if $c\geq 1$ is allowed then dPR can still be successful but RDT predicts that generically one can not guarantee that). In particular, Figure  \ref{fig:fig2} shows $\sqrt{\phi_0}$ as a function of overlap $x$ for several different values of $c$ for critical  sample complexity ratio $\alpha\approx 2.79$. As can be seen, as $c$ decreases curves are ``less flat'' (i.e., decrease more rapidly). This effectively indicates that $c=1$ is the critical value of the norm of the optimizing variable in RdM PR and from this point on we typically focus on such scenarios (for an extended discussion regarding the shapes and relevance of the associated so-called \emph{parametric manifolds}, ${\mathcal {PM}}$, which simultaneously account for all possible changes in $c$ and $x$, see \cite{Stojnicphretreal24}).

\begin{figure}[h]
\centering
\centerline{\includegraphics[width=1\linewidth]{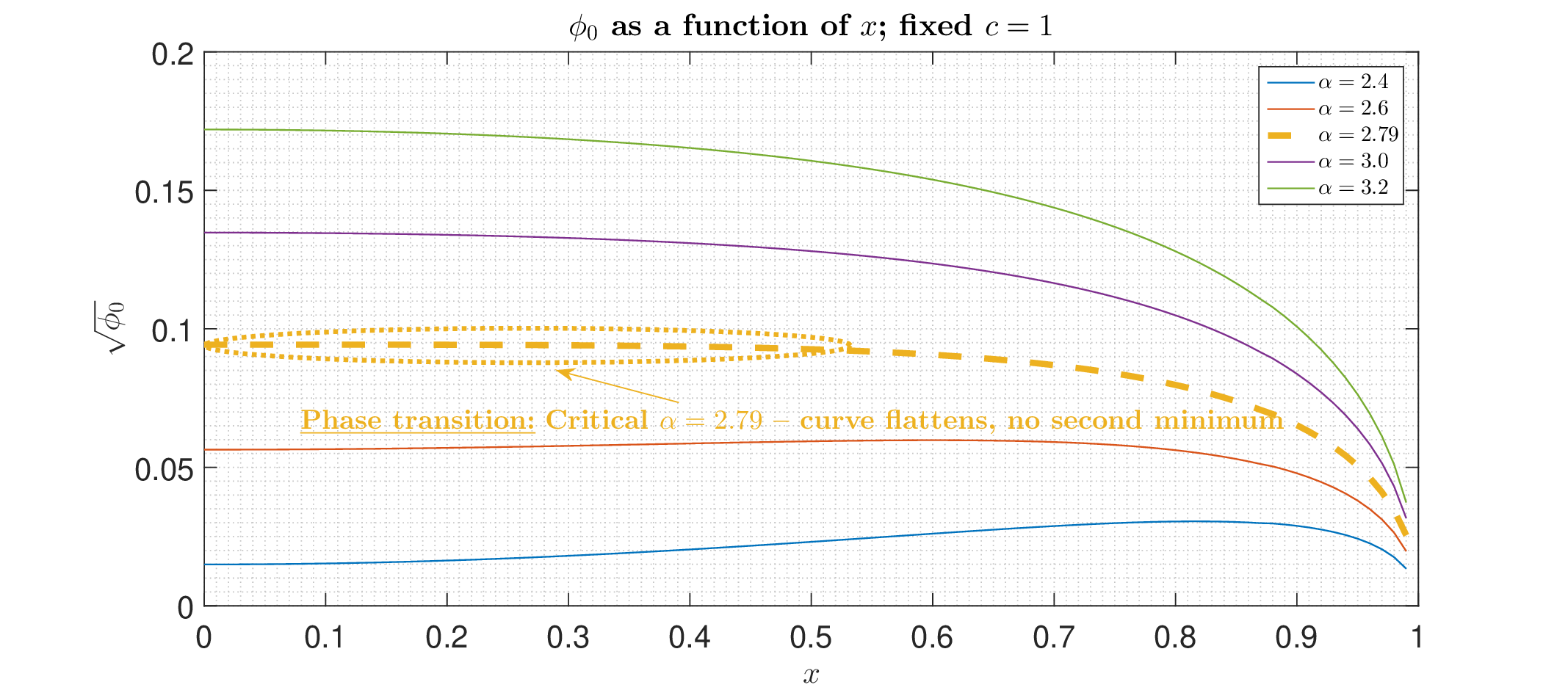}}
\caption{$\phi_0$ as a function of $x$ for different values of $\alpha$; fixed $c=1$; plain RDT}
\label{fig:fig1}
\end{figure}

In addition to the above mentioned strong preference for utilization of practical algorithms that can ensure  $c\leq 1$, two other points need to be kept in mind. \textbf{\emph{(i)}} First, the above theoretical results assume strong concentrations which indeed happen when  $n\rightarrow\infty$. As in practical running  $n$ is limited, the shapes of the curves shown in Figures \ref{fig:fig1} and \ref{fig:fig2} might in  reality be slightly different. While they are likely to maintain the shown \emph{global} increasing/decreasing tendencies, they are also likely to exhibit \emph{local jitteriness} effects. Those on the other hand might sometimes be sufficiently large to create local wells that can ultimately trap descending  algorithms. To combat such eventualities it is often practically safer to follow a simple ``\emph{safer compression}'' rule of thumb which suggests to operate in sample complexity ratio regimes that are slightly (say $10-15\%$) above the phase transitioning prediction.
\begin{figure}[h]
\centering
\centerline{\includegraphics[width=1\linewidth]{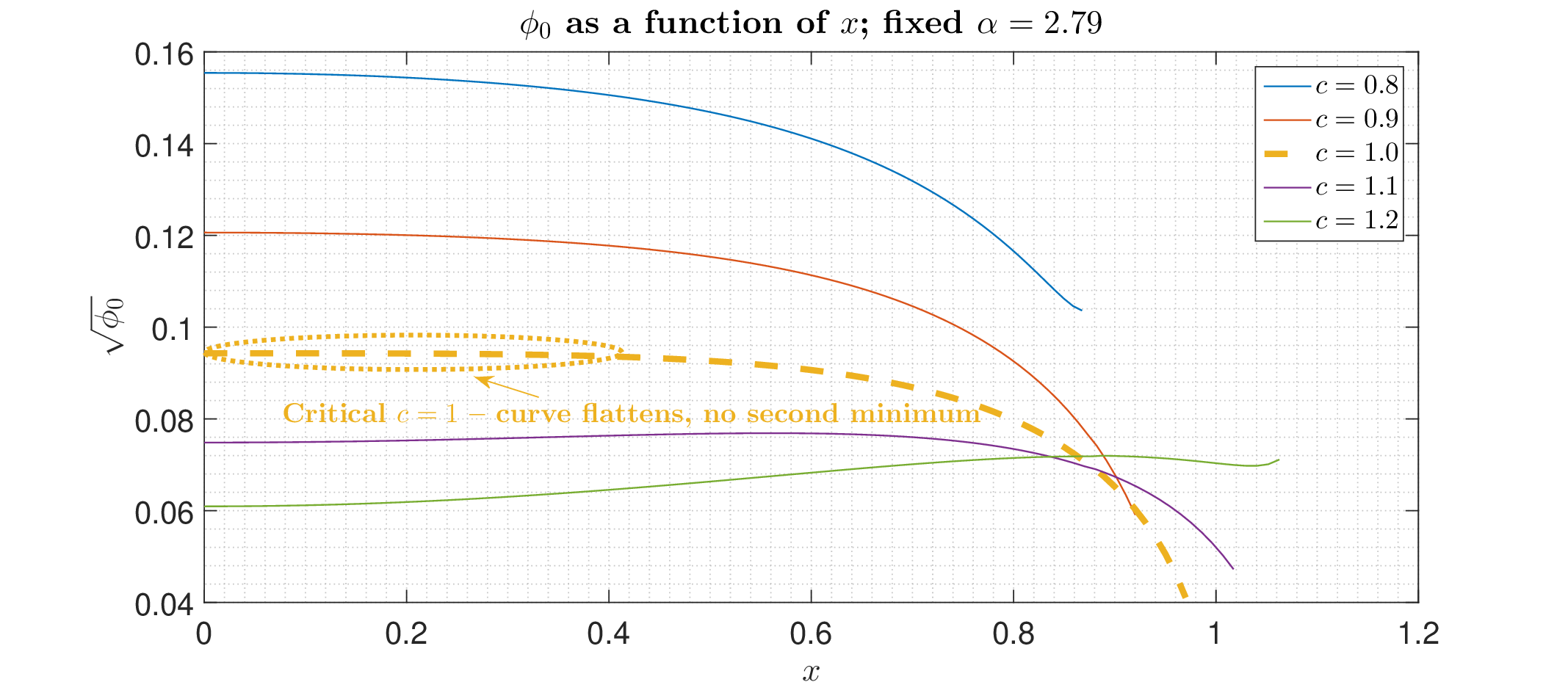}}
\caption{$\phi_0$ as a function of $x$ for different values of $c$; fixed $\alpha=2.79$; plain RDT}
\label{fig:fig2}
\end{figure}
\textbf{\emph{(ii)}} The second point relates to the fact that he strong RDT is not in place which implies a need to keep in mind that the given estimates might have to be adjusted. In particular, the sample complexity ratio estimate  is likely to go down if one implements \emph{lifted} RDT (this, however, in no way affects the above conceptual conclusions; it may only have a favorable  effect on the concrete value of the minimal needed $\alpha$). We discuss the effect of the lifted RDT next and ultimately show that the above (plain RDT)  $\alpha$  estimates can indeed be lowered.

\section{Lifted RDT}
 \label{sec:liftrdt}

A lack of strong random duality implies that the above bounds are \emph{strict} which then allows for their further lifting. Recent development of \emph{fully lifted} (fl) RDT \cite{Stojnicflrdt23} allows the ultimate lifting and a precise characterization of the optimal $\alpha$ values. However, implementation of the fl RDT heavily relies on conducting a sizeable set of numerical evaluations. When it comes to problems of our interest here, these quickly get fairly time consuming and simpler alternatives might be a better choice. Drawing the analogy with \cite{Stojnicphretreal24}, we here consider a bit more convenient,  \emph{partially lifted} (pl) RDT variant \cite{StojnicLiftStrSec13,StojnicMoreSophHopBnds10,Stojnicinjdeeprelu24}. As we will soon see, despite being simpler and numerically less intensive, pl RDT turns out to be able to provide a rather strong improvement over the plain RDT.

As discussed on many occasions in  \cite{Stojnicinjdeeprelu24,StojnicLiftStrSec13,StojnicMoreSophHopBnds10}, the pl RDT relies on similar principles as the plain RDT. However, several subtle technical novelties allow that it makes a substantial improvement over the plain RDT. The key among them is consideration of the so-called \emph{partially lifted random dual}. It is introduced through the following theorem  (basically a partially lifted analogue to Theorem \ref{thm:thm1}).

\begin{theorem} Assume the setup of Theorem \ref{thm:thm1} with the elements of $A\in\mR^{m\times n}$ ($\g^{(0)}\in\mR^{m\times 1}$ and $A_{:,2:n}\in\mR^{m\times (n-1)}$), $\g^{(1)}\in\mR^{m\times 1}$, and  $\h^{(1)}\in\mR^{(n-1)\times 1}$  being iid standard normals. Consider two positive scalars $c$ and $x$  ($0\leq x \leq c$) and set $r\triangleq \sqrt{c-x^2}$. Let $c_3>0$ and
\vspace{-.0in}
\begin{eqnarray}
\cG  \hspace{-.12in} & \triangleq &  \hspace{-.12in} \lp A,\g^{(1)},\h^{(1)}\rp = \lp\g^{(0)},A_{:,2:dn},\g^{(1)},\h^{(1)}\rp  \nonumber \\
\phi(\x,\z,\y) \hspace{-.12in}& \triangleq &\hspace{-.12in}
        \sum_{i=1}^{m} \lp \sqrt{\sum_{j=0}^{d-1} \lp \g_{jm+i}^{(0)}\rp^2}
   -
   \sqrt{\sum_{j=0}^{d-1} \z_{jm+i}^2  }
   \rp^2
\hspace{-.05in}  +\y^T \g^{(0)}x   +  \y^T \g^{(1)}\|\x_{2:dn}\|_2 + \lp \x_{2:dn}  \rp^T\h^{(1)}\|\y\|_2  -\y^T  \z
\nonumber \\
 \bar{f}_{rd}(c,x;\cG) \hspace{-.12in} & \triangleq &  \hspace{-.12in}
  \min_{\|\x_{2:dn}\|_2=r,\z} \max_{\|\y\|_2=r_y,r_y>0}  \phi(\x,\z,\y)
  \nonumber \\
  \bar{\phi}_0 & \triangleq & \max_{r_y>0}\lim_{n\rightarrow\infty} \frac{1}{dn}
 \lp
 \frac{c_3}{2} r^2r_y^2 -
\frac{1}{c_3} \log \lp \mE_{\cG_{(2)}} e^{ - c_3 \bar{f}_{rd}(c,x;\cG) } \rp   \rp .\label{eq:plta16}
\vspace{-.0in}\end{eqnarray}
One then has \vspace{-.02in}
\begin{eqnarray}
  \lim_{n\rightarrow\infty}\mP_{ A } (f_{rp} (c,x; A )   > \bar{\phi}_0)\longrightarrow 1,\label{eq:plta17a0}
\end{eqnarray}
and
\begin{eqnarray}
\hspace{-.0in}(\bar{\phi}_0  > 0)    \Longrightarrow \lp \lim_{n\rightarrow\infty}\mP_{ A } (f_{rp} (c,x; A )   >0)\longrightarrow 1 \rp
 \Longrightarrow \lp \lim_{n\rightarrow\infty}\mP_{A} \lp \mbox{RdM PR is (uniquely) solvable} \rp \longrightarrow 1\rp.\label{eq:plta17}
\end{eqnarray}
 \label{thm:thm2}
\end{theorem}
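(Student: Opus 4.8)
The plan is to mirror the proof of Theorem \ref{thm:thm1}, replacing the plain Gordon comparison by its \emph{partially lifted} counterpart from \cite{StojnicLiftStrSec13,StojnicMoreSophHopBnds10,Stojnicinjdeeprelu24}, exactly as was done for the $d=1$ case in \cite{Stojnicphretreal24} (now transplanted via $m\mapsto dm$, $n\mapsto dn$). One starts from the algebraic representation of $f_{rp}(c,x;A)$ supplied by Lemma \ref{lemma:lemma1}: after the Lagrangian step the objective is a bilinear form in the Gaussian columns of $A$ --- namely $\g^{(0)}=A_{:,1}$ and $A_{:,2:dn}$ --- plus deterministic terms, subject to $\|\x_{2:dn}\|_2=r$ and $\|\y\|_2=r_y$. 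First I would condition on $\g^{(0)}$, so that the surviving randomness is the bilinear coupling $\y^TA_{:,2:dn}\x_{2:dn}$, which is precisely the structure to which Gordon's comparison theorem --- and the lifted refinement needed here --- applies.

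The heart of the argument is the second step. Rather than pushing the expectation through the min--max as in Theorem \ref{thm:thm1}, one introduces the auxiliary parameter $c_3>0$ and passes to the exponential (``free-energy'') moment $\mE_{\cG_{(2)}}e^{-c_3\bar f_{rd}(c,x;\cG)}$, with $\bar f_{rd}$ the un-normalized random dual of (\ref{eq:plta16}) and $\cG_{(2)}$ carrying the auxiliary normals $\g^{(1)},\h^{(1)}$. The partially lifted comparison theorem then yields that the primal free energy is, up to an $o(dn)$ error, lower bounded by $-\frac{1}{c_3}\log\mE_{\cG_{(2)}}e^{-c_3\bar f_{rd}}$; combined with the concentration of $f_{rp}(c,x;A)$ about its mean recalled in Section \ref{sec:ubrdt}, this produces $\lim_{n\to\infty}\mP_A\big(f_{rp}(c,x;A)>\bar\phi_0\big)\to 1$, i.e., (\ref{eq:plta17a0}). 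The additive term $\frac{c_3}{2}r^2r_y^2$ in $\bar\phi_0$ is exactly the Gaussian-completion correction produced when the $r_y$-quadratic piece --- contributed jointly by the constraint $\|\y\|_2=r_y$ and by the term $(\x_{2:dn})^T\h^{(1)}\|\y\|_2$ in $\phi$ --- is linearized through an auxiliary standard normal and then re-integrated, which is the standard bookkeeping step of the partially lifted machinery; one then lets $n\to\infty$, normalizes by $\frac{1}{dn}$, and optimizes over the free $r_y>0$.

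The implications in (\ref{eq:plta17}) are then immediate: $\bar\phi_0>0$ together with (\ref{eq:plta17a0}) forces $f_{rp}(c,x;A)>0$ with probability tending to one, and Lemma \ref{lemma:lemma1} applied with $c=1$ for every $x\neq 1$ upgrades this to unique (up to a global phase) solvability of RdM PR with probability tending to one. The main obstacle is making the second step fully rigorous: one must verify that the partially lifted Gordon comparison of \cite{StojnicLiftStrSec13,Stojnicinjdeeprelu24} applies verbatim to the objective at hand, i.e., that conditioning on $\g^{(0)}$ genuinely leaves a clean bilinear Gaussian form, that its exponential-moment version is legitimate for the resulting min--max objective (including the interleaved $\z$- and $r_y$-optimizations), and that every error term generated by the lifting is $o(dn)$ in the proportional regime. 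Once the comparison is in place, the remaining work --- the optimizations over $\z_i$, $\|\x_{2:dn}\|_2$ and $r_y$ and the ensuing non-central $\chi$ / Laguerre evaluation of $f_q$ --- is identical to what was carried out after (\ref{eq:hrd1}) in Section \ref{sec:ubrdt}, so no new difficulties arise there.
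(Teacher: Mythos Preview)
Your proposal is correct and follows essentially the same route as the paper: condition on $\g^{(0)}$, apply the partially lifted Gordon-type comparison for fixed $r_y$ (the paper pinpoints this as Corollary~3 and equation~(86) of \cite{Stojnicgscompyx16}, together with Lemma~2 of \cite{StojnicMoreSophHopBnds10}, rather than the references you cite, but it is the same machinery), then maximize over $r_y$ and invoke concentration---all viewed as a direct rank-$d$ transplant of Theorem~2 in \cite{Stojnicphretreal24}. Your identification of the $\frac{c_3}{2}r^2r_y^2$ term and the implication chain in (\ref{eq:plta17}) matches the paper's reasoning.
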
\vspace{-.17in}
\begin{proof}
Follows as a trivial extension of Theorem 2 in \cite{Stojnicphretreal24}, which for any  fixed $r_y$ is an automatic application of Corollary 3 from  \cite{Stojnicgscompyx16} (see in particular Section 3.2.1 and equation (86) as well as  Lemma 2 and equation (57) in \cite{StojnicMoreSophHopBnds10}). In particular, the lower-bounding side of equation (86) in \cite{Stojnicgscompyx16} corresponds to terms  $ \y^T \g^{(1)}\|\x_{2:n}\|_2$,  $\x_{2:n}^T\h^{(1)}\|\y\|_2$,  $\y^T  \z  $, and $\frac{c_3}{2}r^2r_y^2$. On the other hand, the left hand side of \cite{Stojnicgscompyx16}'s (86) corresponds to $f_{rp}$. An additional maximization over $r_y$ together with concentrations completes the proof.
\end{proof}

Utilization of the results from previous sections as well as those from \cite{Stojnicphretreal24} allows to significantly speed up handling of the
above partially lifted random dual. One first notes that, analogously to (\ref{eq:hrd1}) and (\ref{eq:hrd2}),
\begin{eqnarray}
\bar{f}_{rd}
\hspace{-.05in} & = &
   \min_{\|\x_{2:dn}\|_2=r,\z} \max_{\|\y\|_2=r_y,r_y>0}  \phi(\x,\z,\y)
\nonumber \\
& = &
 \min_{\z}
 \lp
        \sum_{i=1}^{m} \lp \sqrt{\sum_{j=0}^{d-1} \lp \g_{jm+i}^{(0)}\rp^2}
   -
   \sqrt{\sum_{j=0}^{d-1} \z_{jm+i}^2  }
   \rp^2
  +\|\g^{(0)}x   +  \g^{(1)}r -\z\|_2 r_y - \|\h^{(1)} \|_2 r r_y \rp
  \nonumber \\
 & = &
 \min_{\z}
 \lp
        \sum_{i=1}^{m} \lp \| \g_{(i)}^{(0)}\|_2
   -
   \|\z_{(i)}\|_2
   \rp^2
  +\|\g^{(0)}x   +  \g^{(1)}r -\z\|_2 r_y - \|\h^{(1)} \|_2 r r_y \rp
  \nonumber \\
 & = &
 \min_{\z,\gamma>0} \max_{\gamma_{sph}>0}
 \lp
        \sum_{i=1}^{m} \lp \| \g_{(i)}^{(0)}\|_2
   -
   \|\z_{(i)}\|_2
   \rp^2
 + \gamma +  \frac{\|\g^{(0)}x   +  \g^{(1)}r -\z\|_2^2 r_y^2}{4\gamma} -\gamma_{sph} -  \frac{\|\h^{(1)} \|_2^2 r^2 r_y^2}{4\gamma_{sph}}   \rp
  \nonumber \\
 & = &
 \min_{\z,\gamma>0} \max_{\gamma_{sph}>0}
 \lp
        \sum_{i=1}^{m}
        \lp
        \lp \| \g_{(i)}^{(0)}\|_2
   -
   \|\z_{(i)}\|_2
   \rp^2
+ \gamma +  \frac{\|\g_{(i)}^{(0)}x   +  \g_{(i)}^{(1)}r -\z_{(i)}\|_2^2 r_y^2}{4\gamma}
 \rp
 -\gamma_{sph} -  \frac{\|\h^{(1)} \|_2^2 r^2 r_y^2}{4\gamma_{sph}}   \rp,\nonumber \\
\label{eq:plhrd1}
\end{eqnarray}
where $\bar{f}_{rd}$'s arguments are dropped to make writing easier. The above is structurally identical to (34) in \cite{Stojnicphretreal24}. One can then repeat the analysis between \cite{Stojnicphretreal24}'s (34)-(40) to arrive at the following analogue of \cite{Stojnicphretreal24}'s (40)
 \begin{eqnarray}
  \bar{\phi}_0
& \geq &
\max_{r_y>0}\min_{\gamma>0} \max_{\gamma_{sph}>0}
\lp
 \frac{c_3}{2} r^2r_y^2 + \gamma
 -\frac{\alpha}{c_3} \log \lp \mE_{\cG} e^{ - c_3 \bar{f}_{q}  } \rp
 - \gamma_{sph}  - \frac{1}{c_3} \log \lp \mE_{\cG_{(2)}} e^{ c_3 \frac{\lp \h_i^{(1)}\rp^2r^2r_y^2}{4\gamma_{sph}} }\rp
\rp,
\label{eq:plhrd8}
 \end{eqnarray}
where
 \begin{eqnarray}
 \bar{r}_y & = & \frac{r_y^2}{4\gamma} \nonumber \\
 \gamma_x & = & \frac{\bar{r}_y}{1+\bar{r}_y}
 \nonumber \\
 \bar{f}_{q} & = &
\gamma_x \lp \|\g_{(i)}^{(0)} \|_2-   \|\g_{(i)}^{(0)}x   +  \g_{(i)}^{(1)}r \|_2  \rp^2.
 \label{eq:plhrd9}
 \end{eqnarray}
 Following discussion from the previous section we have
 \begin{eqnarray}
\mE_{\cG} e^{ - c_3 \bar{f}_{q}  } & = &
\mE_{\cG} e^{ - c_3 \lp   \gamma_x \lp \|\g_{(i)}^{(0)} \|_2-   \|\g_{(i)}^{(0)}x   +  \g_{(i)}^{(1)}r \|_2  \rp^2  \rp }
\nonumber \\
& = &
\mE_{\cG} e^{ - c_3 \lp   \gamma_x \lp \|\g_{(i)}^{(0)} \|_2-  r \|\frac{1}{r}\g_{(i)}^{(0,u)}x   +  \g_{(i)}^{(1)} \|_2  \rp^2  \rp }
\nonumber \\
& = &
\mE_{u,y} e^{ - c_3   \gamma_x \lp u-  r y  \rp^2  },
 \label{eq:plhrd9a0}
 \end{eqnarray}
 $u$ and $y$ are chi and non-central chi random variables with $d$ degrees of freedom and the following pdfs
 \begin{eqnarray}
 p(u) & = &  \frac{u^{d-1} e^{-\frac{x^2}{2}}}{2^{\frac{d}{2}-1 }  \Gamma\lp\frac{d}{2}\rp }, u\geq 0
 \nonumber \\
  p_1(y,d;\lambda_d) & = &  \frac{ e^{-\frac{y^2+\lambda_d^2}{2}} y^d\lambda_d  } { (\lambda_d y)^{\frac{d}{2}} }
   \cI_{\frac{d}{2}-1} \lp \lambda_d y  \rp, y\geq 0.
  \label{eq:plhrd9a1}
 \end{eqnarray}
One then has
\begin{eqnarray}
f_{q}^{(lift)}   =   \mE_{\cG} e^{-c_3 \bar{f}_{q}}
  =
\mE_{u,y} e^{ - c_3   \gamma_x \lp u-  r y  \rp^2  }
=
\int_{-\infty}^{\infty}
e^{ - c_3   \gamma_x \lp u-  r y  \rp^2  }
\frac{ e^{-\frac{y^2+\lambda_d^2}{2}} y^d\lambda_d  } { (\lambda_d y)^{\frac{d}{2}} }
   \cI_{\frac{d}{2}-1} \lp \lambda_d y  \rp
   \frac{u^{d-1} e^{-\frac{x^2}{2}}}{2^{\frac{d}{2}-1 }  \Gamma\lp\frac{d}{2}\rp }
   dy
   du.
  \label{eq:plhrd11}
 \end{eqnarray}
After setting
 \begin{eqnarray}
\hat{\gamma}_{sph} = \frac{c_3rr_y+\sqrt{c_3^2r^2r_y^2+4}}{4}.
  \label{eq:plhrd13}
 \end{eqnarray}
one has the following analogue to \cite{Stojnicphretreal24}'s (47)
\begin{eqnarray}
 \bar{\phi}_0
& \geq &
 \max_{c_3> 0}  \max_{r_y>0}\min_{\gamma>0}
\Bigg .\Bigg(
 \frac{c_3}{2} r^2r_y^2 + \gamma
 -\frac{\alpha}{c_3} \log \lp f_{q}^{(lift)}\rp
 - \hat{\gamma}_{sph}  +\frac{1}{2c_3} \log \lp  1  -  \frac{c_3rr_y  }{2\hat{\gamma}_{sph}}     \rp
\Bigg.\Bigg).
\label{eq:plhrd16a0}
 \end{eqnarray}

For the completeness, we also observe that inequality signs in (\ref{eq:plhrd8}) and (\ref{eq:plhrd16a0}) can be replaced with equalities.  In Figure \ref{fig:fig3} we present the effect that the above lifting mechanism actually has. To be in agreement with the earlier numerical considerations, we choose again $d=2$ (as stated earlier, this emulates the so-called complex phase retrieval). We selected $\alpha=2.5$ which is significantly lower than $\alpha\approx 2.79$ predicted as the dPR phase transition by the plain RDT. Also, we in parallel show the corresponding plain RDT curve. As can be seen from the figure, there is a fairly strong lifting effect. Moreover, the lifted curve is a decreasing function of $x$ and has a single minimum for $x=1$. Also, its so-called ``\emph{flat region}'' is rather small and moderately good initializers (say the spectral ones from \cite{MondelliM19,LuoAL19,Stojnicphretinit24}) should fall outside this region and allow dPR to work (not only theoretically but also practically) fairly well. We again choose $c=1$ scenario as it is algorithmically the most critical. The lifting effect exists also for $c<1$ but it fairly quickly becomes rather negligible (similar trend happens for $c>1$). Moreover, generic conclusions made earlier when discussing  plain RDT remain valid here as well. The most important of all, as $c$ decreases the curves are even less flat and continue to have a single minimum thereby reassuring that dPRs indeed converge to the global minimum and indeed solve RdM PR.

\begin{figure}[h]
\centering
\centerline{\includegraphics[width=1\linewidth]{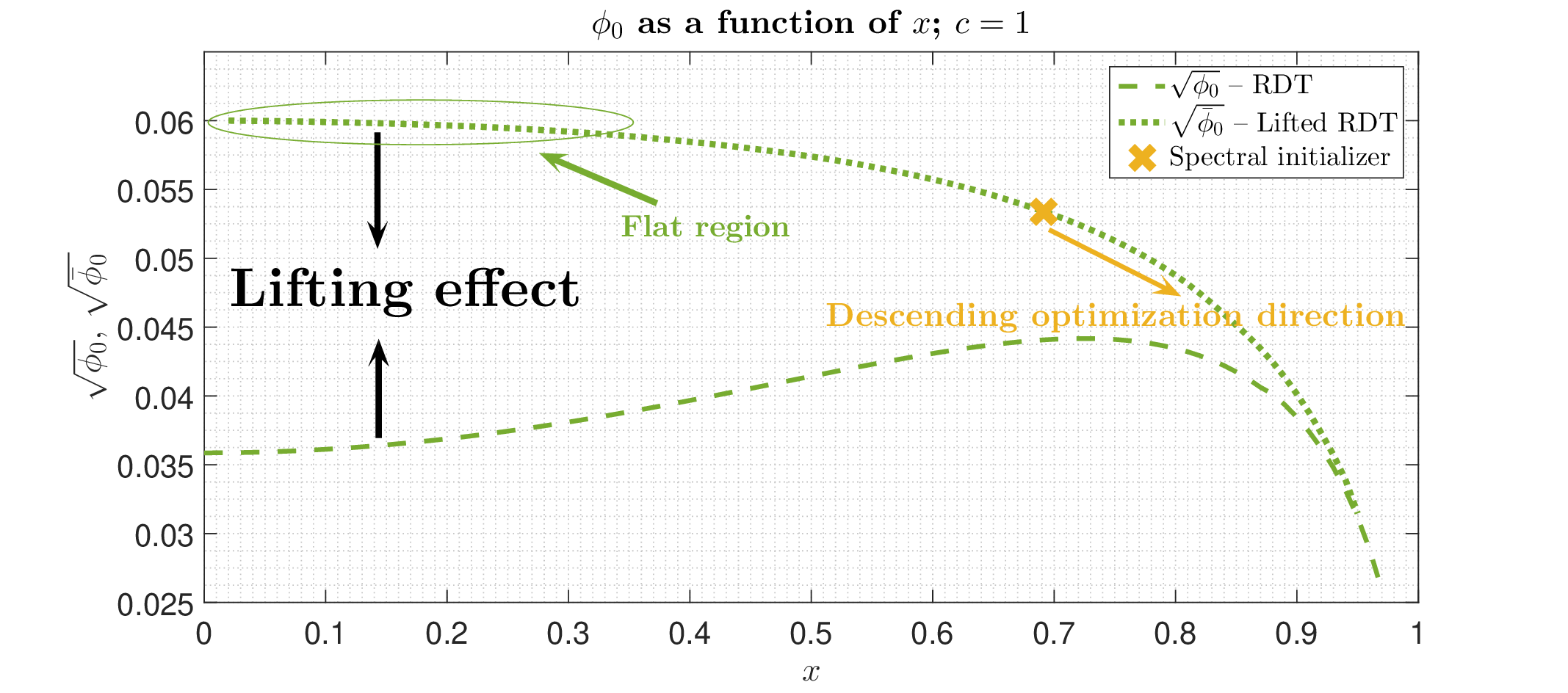}}
\caption{$\phi_0$ as a function of $x$; fixed $\alpha=2.5$; \emph{Lifted} RDT}
\label{fig:fig3}
\end{figure}

To get the limiting phase transition (valid for any nonzero overlap initializer) one would need to check for an $\alpha$ for which the lifted curve remains the flattest in the low $x$ range. Since the flat regions become rather large and the associated sample complexity ratios often unusable in practice there is really not much point in doing so. Instead we selected $\alpha=2.5$ as a value that is not only sufficient to have a decreasing lifted curve but also one with a fairly small flat region that can easily be circumvented with good initializers (for a discussion regarding the role of flat regions and good initializers in successful practical running of dPRs see \cite{Stojnicphretinit24}).

To emphasize another relevant point related to the role of initializers, we in Figures \ref{fig:fig4} and \ref{fig:fig5} selected $\alpha=2.3$ and $\alpha=2.2$, respectively. In both cases the lifted curves are not of the decreasing type. In fact, they are not monotonic either and have ``\emph{two down-sides}'' -- the desired one to the right and the undesired (wrong) one to the left. Depending on which of these sides the initializers' overlap falls descending algorithms may or may not solve the RdM PR. Instead of associating the dPR phase transition with $\alpha$ that results in the decreasing lifted curve, one can consider $\alpha$ such that the lifted curve has a shape that allows initializer to fall on the right down-side (one then also has to ensure that an algorithm which maintains $c=1$ is run or that for $c<1$ initializer remains on the right down-side of the whole parametric manifold). Differently from the above discussed ``\emph{initializer insensitive}'' dPR phase transition, this type of phase transition would clearly depend on the choice of the initializer. If one, for example, relies on the so-called optimal diagonal spectral initializers \cite{MondelliM19,LuoAL19,Stojnicphretinit24} then they fall exactly on the locations indicated in  Figures \ref{fig:fig4} and \ref{fig:fig5}. As Figure \ref{fig:fig4} suggests, one would then have a phase transition around $2.3$.

\begin{figure}[h]
\centering
\centerline{\includegraphics[width=1\linewidth]{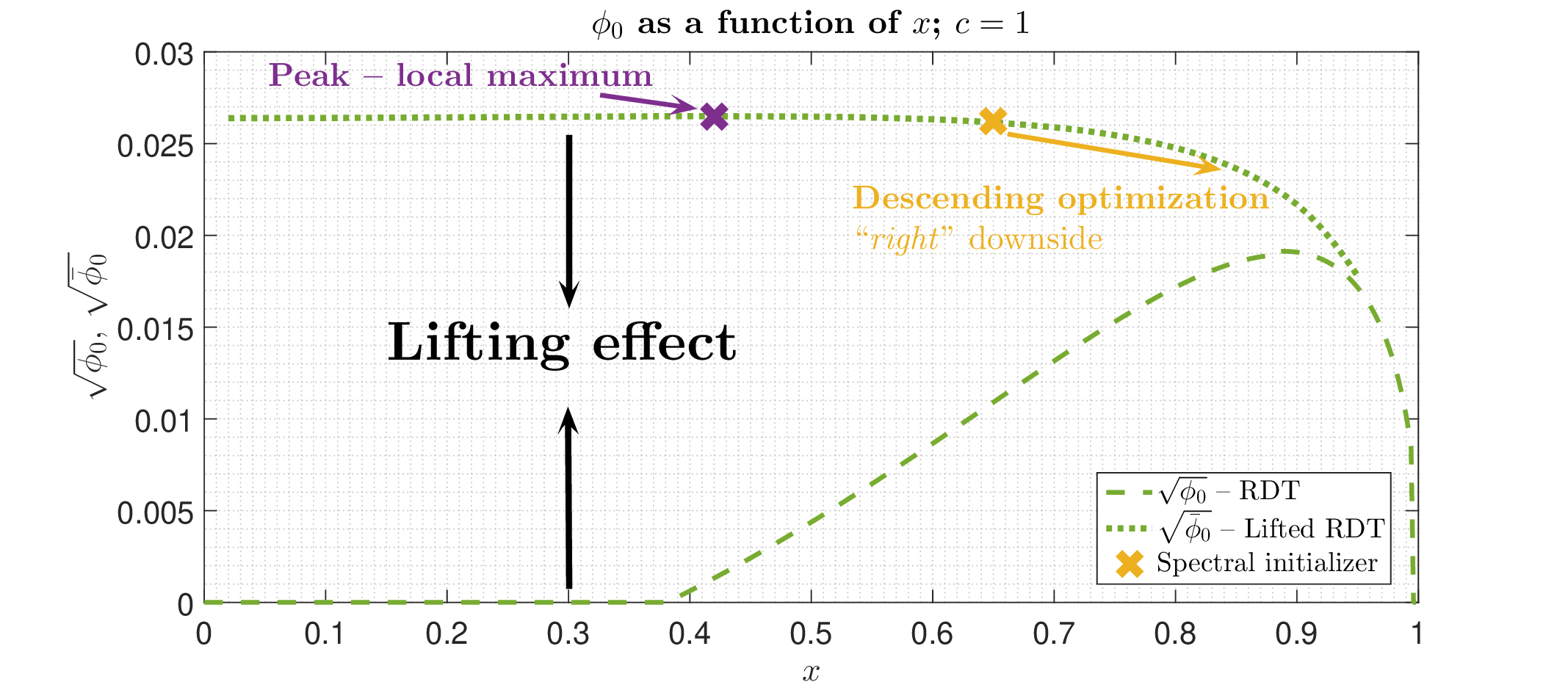}}
\caption{$\phi_0$ as a function of $x$; fixed $\alpha=2.3$; \emph{Lifted} RDT}
\label{fig:fig4}
\end{figure}

\begin{figure}[h]
\centering
\centerline{\includegraphics[width=1\linewidth]{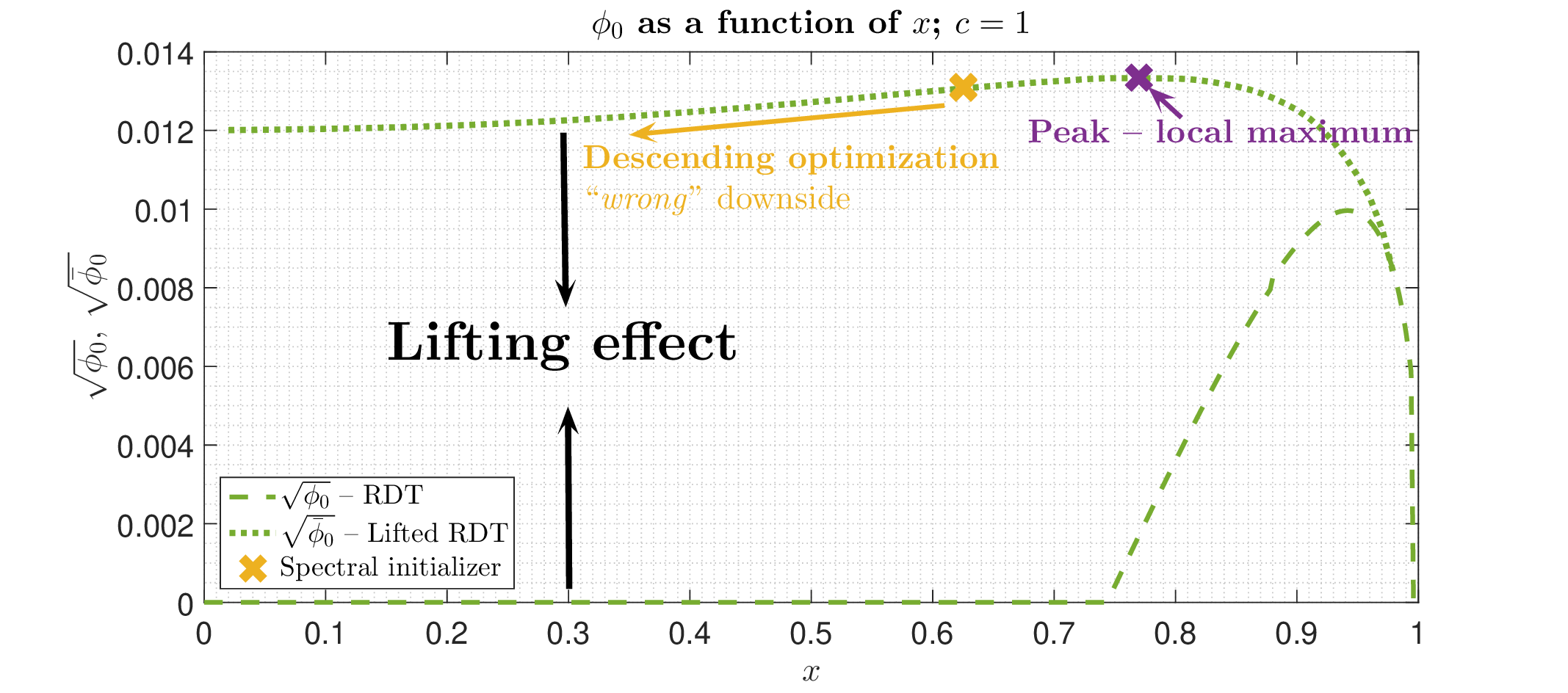}}
\caption{$\phi_0$ as a function of $x$; fixed $\alpha=2.2$; \emph{Lifted} RDT}
\label{fig:fig5}
\end{figure}

The earlier plain  RDT discussion related to local jitteriness also applies to any of the phase transition concepts discussed here. However, as stated above, we intentionally chose $\alpha=2.5$ which is comfortably above the theoretically minimal lifted RDT estimate for (initializer insensitive) dPR phase transition. In other words, when presenting results in Figure \ref{fig:fig3}, we had already kept in mind the earlier mentioned ``\emph{safer compression}'' rule of thumb regarding the choice of sample complexity ratio  being slightly above the theoretical limit. Practical implementations where this can be realistically seen are discussed in next section.

We should also recall on the observations from \cite{Stojnicphretreal24} regarding the role that the  intrinsic random structures properties (beyond optimal objectives landscape) might play in generic PR solvability. In particular, the entire discussion regarding \emph{overlap gap property} (OGP) \cite{Gamar21,GamarSud14,GamarSud17,GamarSud17a,AchlioptasCR11,HMMZ08,MMZ05,Montanari19} and \emph{local entropy} (LE) \cite{Bald15,Bald16,Bald20}  applies here. As discussed in \cite{Stojnicphretreal24}, how relevant all these concepts will turn out to be remains to be seen.

\section{Practical implementations}
 \label{sec:pract}

The above presented theoretical considerations assume that the norm of the optimizing variable, $\sqrt{c}$, remains below $1$. This implies a constrained optimization which makes direct use of the (unconstrained) gradient method (similar to Wirtinger flow) practically inconvenient. To accommodate for this inconvenience, we implemented a log barrier version of the gradient with  the following objective
\begin{eqnarray}
f_{bar}(t_0;\x) \triangleq  t_0\||A\bar{\x}|^2-|A\x|^2 \|_2^2 + \log\lp 1-\|\x\|_2^2 \rp
=t_0f_{plain}(\x) + \log\lp 1-\|\x\|_2^2 \rp,
\label{eq:practeq1}
\end{eqnarray}
where
\begin{eqnarray}
f_{plain}(\x) \triangleq \||A\bar{\x}|^2-|A\x|^2 \|_2^2.
\label{eq:practeq1a0}
\end{eqnarray}
We apply  a gradient with a backtracking optimization procedure, $\mathbf{gradback}$,  to ensure norm constraints are satisfied -- on $f_{bar}(t_0;\x^{(gb,0)}) $ (where $\x^{(gb,0)}$ is a starting point)  and denote output as $\x^{(gb)}$. We then iteratively repeat it for an increasing schedule of $t_0$ (until $t_0$ is sufficiently large, say $10^7$)
\begin{eqnarray}
\bl{\mathbf{gradbar:}} \qquad   \x^{(i+1)}&  =  & \mathbf{gradback}(f_{bar}(t_0^{(i)};\x^{(i)})) \qquad \mbox{and} \qquad  t_0^{(i+1)}=1.6t_0^{(i)}, i=0,1,2,\dots.
\label{eq:practeq2}
\end{eqnarray}
We select  $t_0^{(0)}=0.01$, and typically  rely on a spectral initialization for $\x^{(0))}$, i.e. we take (for $d=2$)
\begin{eqnarray}
\bl{\mbox{\emph{\textbf{diagonal spectral initializer:}}}} \qquad   \x^{(0)}=\x^{(spec)} \triangleq \mbox{max eigenvector} \lp A^T \mbox{diag} \lp  1 - \frac{d}{\bar{\y}^{(d)}}  \rp   A\rp,
\label{eq:practeq6}
\end{eqnarray}
where $ \bar{\y}^{(d)}$ is $\bar{\y}$ replicated $d$ times
\begin{eqnarray}
 \bar{\y}^{(d)} = \begin{bmatrix}
                   \bar{\y}^T & \bar{\y}^T & \dots & \bar{\y}^T
                  \end{bmatrix}^T.
  \label{eq:practeq6a0}
\end{eqnarray}

The results that we  obtained through numerical running of $\mathbf{gradbar}$  are shown in Figure \ref{fig:fig6}.  We chose $n=100$ and denote by $\hat{\x}$ the obtained estimate of $\bar{\x}$. As can be seen, the simulated phase transition is fairly close to the ``safer compression'' adjustment of the theoretical predictions. This happens despite the fact that the dimensions are very small ($n=100$) and that we used squared magnitudes. On the other hand, this is also to be expected and is in an excellent agreement with the discussions in Sections 4.2 and 4.3 in \cite{Stojnicphretreal24}.

\begin{figure}[h]
\centering
\centerline{\includegraphics[width=1\linewidth]{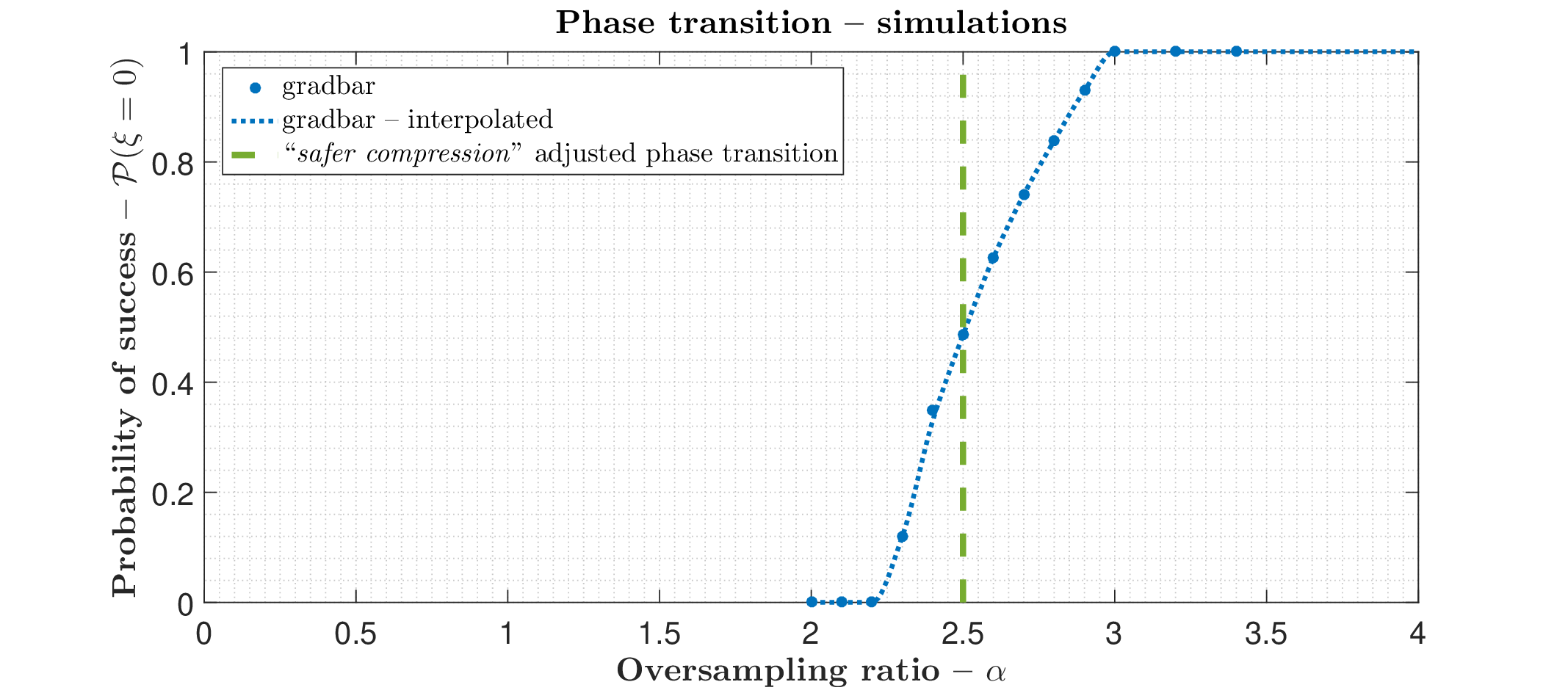}}
\caption{Simulation -- $\mathbf{gradbar}$; $d=2$; $n=100$}
\label{fig:fig6}
\end{figure}

The above $f_{plain}(\x)$ utilizes squared magnitudes in the parenthesis and therefore slightly differs from the corresponding one used in theoretical considerations. The variant from (\ref{eq:practeq1a0}) is practically more convenient as the derivatives are ``\emph{smoother}''.
Its theoretical  analysis would  conceptually follow line by line the one we presented earlier for the non-squared magnitudes. It is just that some of the elegant closed form expressions would not be available which would then additionally burden numerical evaluations and precision of the obtained results. We skip redoing such an exercise and refer to Sections 4.2 and 4.3 in \cite{Stojnicphretreal24} for the obtained results for $d=1$ case are presented. One needs to keep in mind that the theoretical analysis assumes that the initializers overlap in measurement independent. This is not quite the case with the spectral initializers used in the above practical running of $\mathbf{gradbar}$. However, as the obtained results show, this effect seems practically rather unnoticeable. Also, the norm is taken to be smaller than 1 which may be viewed as existence of a prior knowledge of the norm. However, such viewing is not necessary. For example, one can rerun $\mathbf{gradbar}$  $\sim 1/\epsilon$ times (without effectively changing the complexity order) with $\sim 1/\epsilon$ different norms and ultimately obtain results virtually identical (basically in the $\epsilon$ vicinity for any constant $\epsilon>0$) to the ones presented in Figure \ref{fig:fig6}.

\section{Conclusion}
\label{sec:conc}

We study rank $d$ positive definite measurements phase retrieval (RdM PR) generalization and the associated \emph{descending}  algorithms (dPR). \cite{Stojnicphretreal24} developed \emph{Random duality theory} (RDT) based analytical program to characterize performance of dPRs when measurements are of rank 1. We here show how the program can be extended so that it can handle rank $d$ measurements.  The impact of oversampling ratio, $\alpha$, on dPR's performance is numerically evaluated through the developed extension of the RDT program  and emergence of a phase transition is observed. In particular, as $\alpha$ increases, the optimizing objective transitions from a multi to a single local minimum phase, resulting in an analogous dPR transition from a generically non-converging to a generically converging to a global optimum phase.

An anticipated lack of strong random duality (and consequent, strictly bounding, nature of plain RDT results) motivated implementation of a powerful \emph{Lifted} RDT based mechanism. The ``\emph{lifting effect}''  and the resulting sample complexity ratio phase transition adjustment are precise;y characterized as well. Since the obtained theoretical predictions assume heavy concentrations (which for $n\rightarrow\infty$ indeed happen), they are a bit optimistic for practical considerations. To account for the finite dimensional effects and potential appearance of the objective's local jitteriness,  we introduce the ``\emph{safer compression}'' phase transition adjustment concept. It assumes that in practical considerations one relies on a sample complexity ratios, $\alpha$, slightly higher (say $10-20\%$) than the minimally needed phase transitioning one.

Theoretical results are also complemented with the corresponding numerical experiments as well. For rank 2 measurements case (practically highly relevant one that effectively emulates the complex phase retrieval), we implement a log barrier variant of the gradient descent together with the optimal diagonal spectral initializers from \cite{MondelliM19,LuoAL19,Stojnicphretinit24}. Even for small dimensions (on the order of $n=100$) where strong local jitteriness effects are highly likely, the simulated phase transition fairly close matches the corresponding safer compression adjusted theoretical one.


Developed methodologies are fairly general and allow for  many further extensions. Clearly, studying further impact of fl RDT is (from the theoretical point of view) the next logical step. On the other hand, studying signal and/or measurements further structuring, stability, and robustness appear as topics of great practical importance. Concepts presented here apply to any of these studies. Since the relevant technical details are problem specific, we present them in separate papers.

\begin{singlespace}
\bibliographystyle{plain}
\bibliography{nflgscompyxRefs}
\end{singlespace}


\end{document}